\def\arxivversion{1}

\documentclass{article} 
\usepackage{iclr2027_conference,times}

\usepackage{amsmath,amsfonts,bm}

\def\eqref#1{equation~\ref{#1}}

\def\1{\bm{1}}

\DeclareMathAlphabet{\mathsfit}{\encodingdefault}{\sfdefault}{m}{sl}
\SetMathAlphabet{\mathsfit}{bold}{\encodingdefault}{\sfdefault}{bx}{n}

\usepackage{graphicx}
\usepackage{etoolbox}
\makeatletter
\@ifundefined{Gread@transgrouptrue}{}{%
  \patchcmd{\Gread@pdf}{\Gread@transgrouptrue}{\Gread@transgroupfalse}{}%
    {\PackageError{arxiv}{Could not preserve PDF figure compositing}{Check the graphics driver.}}}
\makeatother
\usepackage{float}
\usepackage{capt-of}
\usepackage{flafter}
\usepackage[hypertexnames=false]{hyperref}
\usepackage{url}
\usepackage{algorithm}
\usepackage{algpseudocode}
\usepackage{amsthm}
\usepackage{booktabs}
\usepackage{multirow}
\usepackage{wrapfig}
\usepackage{needspace}
\usepackage{xcolor}

\newif\ifreviewchanges
\reviewchangesfalse
\definecolor{reviewblue}{RGB}{0,77,166}
\newcommand{\reviewcolor}{\ifreviewchanges\color{reviewblue}\fi}
\DeclareRobustCommand{\review}[1]{\leavevmode{\reviewcolor #1}}
\newcommand{\reviewtablerows}[1]{\csname @@input\endcsname{#1}}

\AddToHook{env/figure/begin}{%
  \setlength{\intextsep}{8pt}%
  \setlength{\abovecaptionskip}{6pt}%
  \setlength{\belowcaptionskip}{2pt}}

\newtheorem{theorem}{Theorem}

\title{Value as Memory: Online Value-Guided Post-training for General Robot Policies}

\ifdefined\arxivversion
\iclrfinalcopy
\usepackage{etoolbox}
\makeatletter
\patchcmd{\@maketitle}{\ificlrfinal}{\vskip 3pt\ificlrfinal}{}%
  {\PackageError{arxiv}{Could not adjust title-author spacing}{Check the title template.}}
\patchcmd{\@maketitle}{\vskip 0.3in minus 0.1in}{\vskip 0.14in minus 0.05in}{}%
  {\PackageError{arxiv}{Could not adjust author-abstract spacing}{Check the title template.}}
\makeatother
\author{%
\makebox[\dimexpr\textwidth-2\tabcolsep\relax][c]{%
\begin{tabular}{@{}c@{}}
Wenye Yu\textsuperscript{1,2,*}\quad
Yijin Chen\textsuperscript{1,*}\quad
Kaiyi Xiong\textsuperscript{1,*}\quad
Wenbo Han\textsuperscript{1}\quad
Wendi Chen\textsuperscript{1,2}\\[2pt]
Yang Jin\textsuperscript{1}\quad
Han Xue\textsuperscript{1}\quad
Jun Lv\textsuperscript{1,3}\quad
Cewu Lu\textsuperscript{1,2,3,\ensuremath{\dagger}}\quad
Chuan Wen\textsuperscript{1,\ensuremath{\dagger}}\\[4pt]
\normalfont\textsuperscript{1}Shanghai Jiao Tong University\\
\normalfont\textsuperscript{2}Shanghai Innovation Institute\quad
\normalfont\textsuperscript{3}Noematrix Ltd.
\end{tabular}}%
}

\makeatletter
\newcommand{\arxivtitlefootnotes}{%
  \fancyhead{}%
  \renewcommand{\headrulewidth}{0pt}%
  \renewcommand{\footrulewidth}{0pt}%
  \begingroup
    \renewcommand{\thefootnote}{}%
    \long\def\@makefntext##1{\noindent##1}%
    \footnotetext[0]{%
      \textsuperscript{*}Equal contribution.\quad
      \textsuperscript{\ensuremath{\dagger}}Corresponding authors.\\
      Project page: \url{https://virlus.github.io/vam/}}%
  \endgroup
  \setcounter{footnote}{0}}
\makeatother

\else

\author{Antiquus S.~Hippocampus, Natalia Cerebro \& Amelie P. Amygdale \thanks{ A footnote may provide further information
about an author (webpage, alternative address)---\emph{not} acknowledgments for
funding agencies.  Funding acknowledgements go at the end of the paper.} \\
Department of Computer Science\\
Cranberry-Lemon University\\
Pittsburgh, PA 15213, USA \\
\texttt{\{hippo,brain,jen\}@cs.cranberry-lemon.edu} \\
\And
Ji Q. Ren \& Yevgeny LeNet \\
Department of Computational Neuroscience \\
University of the Witwatersrand \\
Joburg, South Africa \\
\texttt{\{robot,net\}@wits.ac.za} \\
\AND
Coauthor \\
Affiliation \\
Address \\
\texttt{email}
}

\fi

\begin{document}

\maketitle
\ifdefined\arxivversion
  \arxivtitlefootnotes
\fi

\begin{abstract}
Real-world post-training of generalist robot policies faces two challenges: partial observability makes action selection ambiguous, while offline post-training methods delay feedback from deployment experience.
We present Value as Memory (VaM), a framework that addresses these challenges through value caching and online real-world reinforcement learning (RL).
Value caching conditions actions on recent value predictions as task-progress memory, while online updates enable timely policy improvement by combining offline demonstrations, off-policy interventions, and on-policy rollouts for sample-efficient adaptation.
As a result, VaM transitions from offline RL warmup to online autonomous learning with sparse rewards.
Across long-horizon tasks involving deformable objects and contact-rich manipulation, VaM yields a mean absolute success-rate gain of $29.4\%$ with value caching, while achieving a $32.8\%$ gain with online autonomous learning over matched offline training.
\review{The evaluated online schedule performs $8\times$ as many policy iterations under the same gradient-step budget, with more than $2\times$ lower recorded time per gradient step with concurrent data collection and training.}

\end{abstract}

\setlength{\parskip}{3pt}
\setlength{\abovedisplayskip}{5pt plus 1pt minus 1pt}
\setlength{\belowdisplayskip}{5pt plus 1pt minus 1pt}
\begin{figure}[H]
    \centering
    \includegraphics[width=\linewidth]{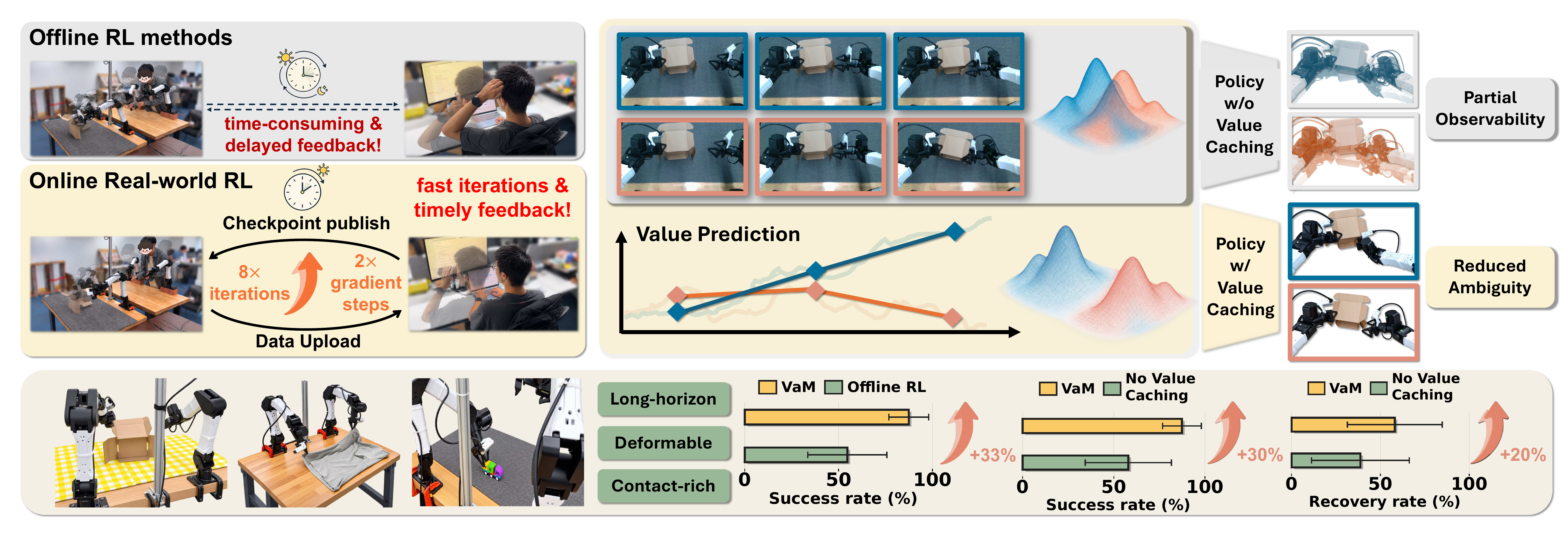}
    \caption{VaM uses online real-world RL and value caching to boost sample efficiency and mitigate partial observability in long-horizon, deformable-object, and contact-rich manipulation.}
    \label{fig:teaser}
\end{figure}

\section{Introduction}

Real-world reinforcement learning (RL) enables generalist robot policies to learn from physical interaction and improve on deployment tasks \citep{chen2025conrft,physicalintelligence2025pi06}.
Initialized from offline expert demonstrations, a policy aims to maximize cumulative task rewards \citep{luo2024serl,frans2025diffusion} and optionally learns from human-intervention data during deployment \citep{luo2025hilserl,wang2026learningwhiledeploying}.

However, current real-world RL methods face limitations in data reuse and policy-update frequency.
On-policy updates primarily use fresh rollouts, limiting the reuse of historical experience \citep{chen2025pirl}.
Iterated offline RL separates data collection from optimization, delaying feedback from newly encountered states \citep{physicalintelligence2025pi06}.
Besides, partial observability poses another challenge.
With limited observation histories, visually similar states may require different actions.
Meanwhile, value functions provide compact task-progress estimates, but typically serve as filtering scores or optimization weights \citep{physicalintelligence2025pi06,frans2025diffusion}.
We therefore ask whether value estimates on observation histories can augment policy inputs to mitigate partial observability, and how an online RL system can exploit a learned value function to incorporate offline demonstrations, off-policy data, and on-policy rollouts for joint adaptation of the policy and value functions with high sample efficiency and rapid closed-loop feedback.

To this end, we introduce \textbf{Value as Memory (VaM)}, an offline-to-online RL system with two key features.
\textbf{First}, we propose value caching, which extends value predictions beyond training supervision by supplying their recent history as an additional policy input.
Alongside visual and proprioceptive histories, this cache indicates whether estimated progress improves, stalls, or declines, helping distinguish states requiring different actions.
\textbf{Second}, VaM uses a hybrid training data scheme that combines offline demonstrations, off-policy trajectories, and current on-policy rollouts for sample-efficient joint adaptation of policy and value models.
Our training regime progresses from offline RL warmup through human-assisted collection to autonomous learning with sparse rewards.
Across real-world tasks involving deformable objects and contact-rich manipulations, VaM improves mean absolute success rate by $29.4\%$ over its variant without value caching.
\review{It gains $32.8\%$ in mean absolute success rate over its offline RL counterpart under matched training budgets.}
\review{The evaluated online schedule performs $8\times$ as many policy iterations under the same gradient-step budget, with more than $2\times$ lower recorded time per gradient step with concurrent data collection and training.}
These results support value caching for mitigating partial observability and our online RL system with a hybrid training data scheme for improving sample efficiency and policy iteration frequency.

Our contributions are twofold:
\begingroup
\setlength{\topsep}{0pt}
\setlength{\partopsep}{0pt}
\setlength{\parsep}{0pt}
\setlength{\itemsep}{1pt}
\begin{enumerate}
    \item We propose value caching to condition policies on recent task-progress estimates, with theoretical analysis and real-world ablations supporting its role in mitigating action ambiguity under partial observability.
    \item We develop an online RL framework that reuses offline demonstrations, off-policy trajectories, and current on-policy rollouts for joint policy and value adaptation, improving sample efficiency and policy iteration frequency.
\end{enumerate}
\endgroup

\section{Preliminaries}
\label{sec:preliminaries}

\subsection{Problem Formulation}

We model iterative robot post-training as a Markov Decision Process (MDP) $\mathcal{M}=(\mathcal{S},\mathcal{A},P,r,\gamma)$, where $\mathcal{S}$ and $\mathcal{A}$ are the state and action spaces, $P(s_{t+1}\mid s_t,a_t)$ is the transition kernel, $r(s_t,a_t)$ is the reward function, and $\gamma\in[0,1)$ is the discount factor.
The policy $\pi_\theta(a_{t:t+H}\mid s_t)$ has parameters $\theta$ and predicts the action sequence $a_{t:t+H}=(a_t,\ldots,a_{t+H-1})$, where $H$ is the prediction horizon.
We let $\mathcal{D}_{\mathrm{offline}}=\{\tau_{\mathrm{offline},n}\}_{n=1}^{N}$ denote $N$ successful expert demonstrations and $\pi_{\theta^{(0)}}$ the policy initialized from them.
Post-training then runs for $M$ rounds.
At each round $m\in\{0,\ldots,M-1\}$, we deploy $\pi_{\theta^{(m)}}$ to collect rollout and human-intervention trajectories $\mathcal{D}_{\mathrm{online}}^m$ and use the accumulated data to obtain $\pi_{\theta^{(m+1)}}$.
With transition reward $r_t=r(s_t,a_t)$, our objective is to maximize the expected discounted cumulative reward $\mathbb{E}_{\pi_\theta}\!\left[\sum_{t=0}^{\infty}\gamma^t r_t\right]$.

\subsection{Distributional Value Learning}

We learn a distributional value function to estimate task progress and provide the scalar advantages used for policy improvement.
Following previous literature on distributional value learning \citep{bellemare2017distributional,li2025grrl,physicalintelligence2025pi06}, the value network with parameters $\psi$ predicts a categorical distribution $p_\psi(z_k\mid s)$ on $K\geq2$ uniformly spaced atoms $\mathcal{Z}=\{z_k\}_{k=1}^{K}\subset[0,1]$, where $z_k=(k-1)/(K-1)$ and the scalar value is $V_\psi(s)=\sum_{k=1}^{K}z_k p_\psi(z_k\mid s)$.
We initialize the value expert with normalized demonstration progress, then adapt it through temporal-difference (TD) learning with sparse outcome rewards.
For a successful demonstration $\tau_{\mathrm{offline}}\in\mathcal{D}_{\mathrm{offline}}$ with $T$ timesteps, the initialization target $q_t^{\mathrm{init}}$ is a normalized Gaussian label over $\mathcal{Z}$ centered at $t/T$ with variance $\sigma_{\mathrm{init}}^2>0$.
The cross-entropy (CE) initialization objective is
\begin{equation}
    \mathcal{L}_{\mathrm{init}}(\psi)
    =\mathbb{E}_{\tau_{\mathrm{offline}},t}\!\left[\operatorname{CE}\!\left(q_t^{\mathrm{init}},p_\psi(\cdot\mid s_t)\right)\right],
    \quad
    \operatorname{CE}(q,p)=-\sum_{k=1}^{K}q_k\log p_k.
    \label{eq:progress_initialization}
\end{equation}
For TD learning, only successful terminal transitions receive $+1$, with zero rewards otherwise.
After each gradient step, the target parameters $\bar\psi$ follow the exponential moving average (EMA) \mbox{$\bar\psi\leftarrow(1-\beta)\bar\psi+\beta\psi$}, with $\beta\in(0,1]$.
We define $R_{t:t+H}=\sum_{h=0}^{H-1}\gamma^h r_{t+h}$ and a terminal indicator $d_t\in\{0,1\}$ for the $H$-step segment.
\review{The TD target $q_t^{\mathrm{TD}}$ moves each mass $p_{\bar\psi}(z_k\mid s_{t+H})$ to $R_{t:t+H}+\gamma^H(1-d_t)z_k$ before linear projection onto $\mathcal{Z}$ (detailed derivation in Appendix~\ref{app:training_details}).}
We train the value expert with
\begin{equation}
    \mathcal{L}_{\mathrm{TD}}(\psi)
    =\mathbb{E}_{t}\!\left[\operatorname{CE}\!\left(q_t^{\mathrm{TD}},p_\psi(\cdot\mid s_t)\right)\right].
    \label{eq:td_value_learning}
\end{equation}
The TD-adapted value expert gives the scalar advantage estimate $\widehat{A}_t=R_{t:t+H}+\gamma^H(1-d_t)V_\psi(s_{t+H})-V_\psi(s_t)$.

\subsection{Advantage-conditioned Policy Learning}
\label{sec:advantage_conditioned_policy}

The action expert learns $\pi_\theta$ by flow matching, with an optional advantage condition to distinguish action quality.
For a sample $(s_t,a_{t:t+H})\sim\mathcal{D}_{\mathrm{BC}}$, where $\mathcal{D}_{\mathrm{BC}}$ is the behavior-cloning dataset and $a_{t:t+H}$ is an $H$-step action sequence, we draw noise $\epsilon\sim\mathcal{N}(0,I)$ with identity covariance $I$ and flow timestep $\lambda\sim\mathcal{U}(0,1)$, and set $a_\lambda=(1-\lambda)\epsilon+\lambda a_{t:t+H}$ \citep{black2025pi05}.
The velocity field $v_\theta$ shares the policy parameters $\theta$ and takes an optional condition $c$, where $c=\emptyset$ denotes behavior cloning.
With the expectation taken over the sampled data, noise, and flow timestep, the flow-matching objective and its behavior-cloning instance are
\begin{equation}
    \mathcal{L}_{\mathrm{FM}}(\theta;c)=\mathbb{E}\!\left[\left\lVert v_\theta(a_\lambda,\lambda,s_t,c)-(a_{t:t+H}-\epsilon)\right\rVert_2^2\right],\quad \mathcal{L}_{\mathrm{BC}}(\theta)=\mathcal{L}_{\mathrm{FM}}(\theta;\emptyset).
    \label{eq:flow_matching_objectives}
\end{equation}
\begingroup
\makeatletter
\edef\@currentlabel{\p@equation\theequation}
\label{eq:flow_matching_bc}
\label{eq:advantage_conditioned_flow_matching}
\makeatother
\endgroup
For advantage conditioning, CFGRL provides a policy-improvement guarantee when the optimality probability increases monotonically with the reference-policy advantage \citep{frans2025diffusion,physicalintelligence2025pi06}.
For online rollout data $\mathcal{D}_{\mathrm{online}}^m$ collected at round $m$, we set the threshold $\xi_m^\rho=\operatorname{Quantile}_{\rho}(\{\widehat{A}_i:(s_i,a_{i:i+H})\in\mathcal{D}_{\mathrm{online}}^m\})$ with percentile parameter $\rho\in(0,1)$.
The binary advantage condition is $o_t=\mathbb{I}[\widehat{A}_t\geq\xi_m^\rho]\in\{0,1\}$.
During training, we randomly replace $o_t$ with $\emptyset$, yielding $o_t\in\{\emptyset,0,1\}$, and minimize $\mathcal{L}_{\mathrm{FM}}(\theta;o_t)$.
At inference, $o_t=1$ requests high-advantage actions, and classifier-free guidance with weight $w\geq1$ combines the conditional and unconditional velocity fields \citep{ho2022classifierfree}.

\section{Method}
\label{sec:method}

\subsection{Method Overview}

VaM uses a distributional value expert to supply recent progress predictions to the action expert through a value cache and advantage labels for policy post-training.
Training proceeds through four stages: offline demonstration initialization, offline RL warmup, Online DAgger, and Learn from Experience (LfE).
The value and action experts are updated in parallel during online RL.
\review{Figure~\ref{fig:method_overview} summarizes this training pipeline and the value caching architecture.}

\begin{figure}[!t]
    \centering
    \includegraphics[width=\textwidth]{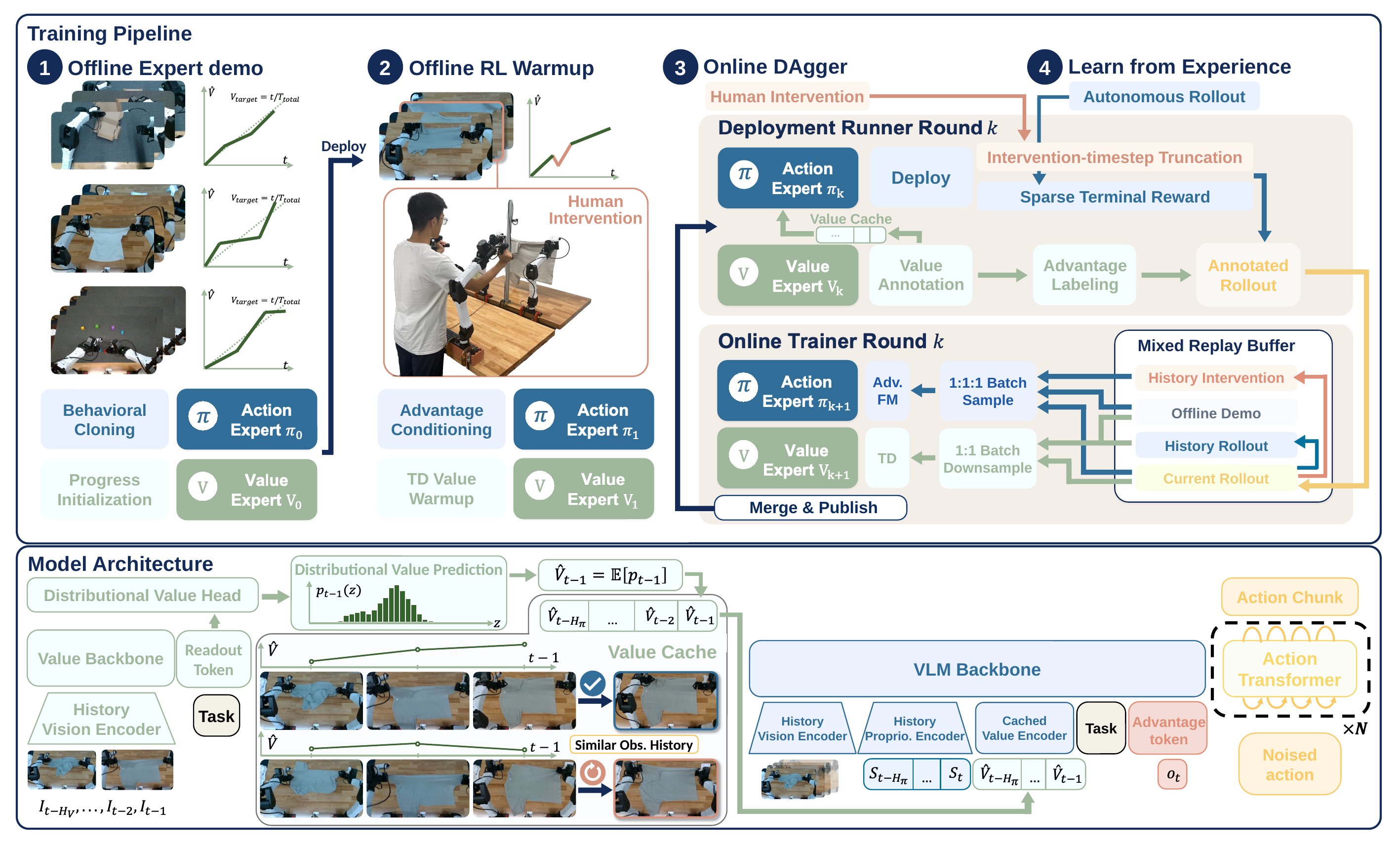}
    \caption{\textbf{VaM training pipeline and model architecture.}
    The top panel summarizes the four-stage offline-to-online training pipeline.
    The bottom panel shows how the distributional value expert turns previously computed value predictions into the value cache that conditions the action expert.}
    \label{fig:method_overview}
\end{figure}

\subsection{Value Caching and Soft Aliasing Analysis}
\label{sec:value_caching}

\subsubsection{\review{Value Caching}}

The value cache gives the action expert explicit recent progress information without delaying its current action.
At deployment round $k$, we fix the value snapshot $V_{\psi^{(k-1)}}$ obtained at the end of the preceding training round and use the expectation of its categorical return distribution as the scalar progress signal.
At timestep $t$, we denote the action expert's visual-observation history by $h_t^\pi$, the value expert's visual-observation history by $h_t^V$, and the action expert's proprioceptive-observation history by $s_t$.
The fixed value snapshot predicts $\widehat V_{\psi^{(k-1)},t}=V_{\psi^{(k-1)}}(h_t^V)$.
Value prediction and action generation run asynchronously, so the value prediction for timestep $t$ is unavailable when the action expert generates $a_{t:t+H}$.
Therefore, we let $H_\pi\geq1$ denote the action expert's observation horizon and define the value cache $c_t^{(k-1)}$ and the action expert's complete observation $x_t^\pi$ as
\begin{equation}
    \begin{aligned}
    c_t^{(k-1)}
    &=
    \left(\widehat V_{\psi^{(k-1)},t-H_\pi},\ldots,\widehat V_{\psi^{(k-1)},t-1}\right),
    \\
    x_t^\pi
    &=
    \left(h_t^\pi,s_t,c_t^{(k-1)}\right).
    \end{aligned}
    \label{eq:value_cache}
\end{equation}
The cache summarizes recent task progress and, when $H_\pi\geq2$, whether progress is improving, stalled, or declining, helping distinguish visually similar states that require different optimal actions.

\subsubsection{Why Value Caching Eases Soft Aliasing}
\label{sec:soft_aliasing_analysis}

Value caching targets soft aliasing, in which visually similar states require different high-advantage action distributions.
We formalize how augmenting the policy input with the value cache can relax a representation-induced lower bound on policy prediction error.
We consider two states indexed by $i\in\{1,2\}$ within a fixed deployment round $k$.
Equation~\ref{eq:value_cache} gives their value caches $c_i^{(k-1)}$ and complete action observations $x_i^\pi=(h_i^\pi,s_i,c_i^{(k-1)})$.
We let $z_i^{\mathrm{obs}}$ denote the representation of the visual-proprioceptive components $(h_i^\pi,s_i)$.
For state $i$, $\mu_i$ denotes the distribution of training action chunks assigned the high-advantage label defined in Sec.~\ref{sec:advantage_conditioned_policy}.
We measure the difference between action distributions with the Wasserstein distance $W_2$.
We represent the complete action observation $x_i^\pi$ as $z_i^\pi=(z_i^{\mathrm{obs}},c_i^{(k-1)})$.
We define
\begin{equation}
    \begin{aligned}
    \varepsilon_{\mathrm{obs}}
    &=\|z_1^{\mathrm{obs}}-z_2^{\mathrm{obs}}\|_2,
    \\
    \varepsilon_\pi
    &=\|z_1^\pi-z_2^\pi\|_2
    =\sqrt{\varepsilon_{\mathrm{obs}}^2+\|c_1^{k-1}-c_2^{k-1}\|_2^2},
    \\
    \Delta_a
    &=W_2(\mu_1,\mu_2),
    \end{aligned}
    \label{eq:value_cache_geometry}
\end{equation}
Soft aliasing occurs when $\varepsilon_{\mathrm{obs}}$ is small but $\Delta_a$ is large: the policy sees similar inputs even though the desired actions differ.
We assume that the policy is locally $L$-Lipschitz with respect to its input representation: for any two representations $z$ and $z'$ in the relevant neighborhood,
\begin{equation}
    W_2\!\left(\pi_\theta(\cdot\mid z,o_t=1),\pi_\theta(\cdot\mid z',o_t=1)\right)
    \leq L\|z-z'\|_2.
    \label{eq:local_lipschitz}
\end{equation}

\begin{theorem}[Local soft-aliasing bound]
\label{thm:soft_aliasing}
For representation $r\in\{\mathrm{obs},\pi\}$, $z_i^r$ denotes corresponding input for state $i$ and $\varepsilon_r$ its pairwise distance in Eq.~\ref{eq:value_cache_geometry}.
Any locally $L$-Lipschitz policy must incur
\begin{equation}
    \max_{i\in\{1,2\}}
    W_2\!\left(\pi_\theta(\cdot\mid z_i^r,o_t=1),\mu_i\right)
    \geq
    \frac{1}{2}(\Delta_a-L\varepsilon_r)_+,
    \label{eq:soft_aliasing_bound}
\end{equation}
where $(x)_+=\max(x,0)$.
In particular, the lower bound is strictly positive whenever $\Delta_a>L\varepsilon_r$, so no locally $L$-Lipschitz policy can match both target action distributions exactly.
If $L\varepsilon_{\mathrm{obs}}<\Delta_a\leq L\varepsilon_\pi$, an observation-only policy cannot match both target action distributions while satisfying the local Lipschitz condition, whereas exact matching at $z_1^\pi$ and $z_2^\pi$ does not violate this constraint.
\end{theorem}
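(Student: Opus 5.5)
The plan is a triangle-inequality argument in the $W_2$ metric, followed by two short remarks for the two regimes. Write $\pi_i^r=\pi_\theta(\cdot\mid z_i^r,o_t=1)$ and $e_i=W_2(\pi_i^r,\mu_i)$ for $i\in\{1,2\}$.

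Since $W_2$ is a metric on distributions with finite second moment, I would chain
\[
\Delta_a=W_2(\mu_1,\mu_2)\le W_2(\mu_1,\pi_1^r)+W_2(\pi_1^r,\pi_2^r)+W_2(\pi_2^r,\mu_2).
\]
The middle term is bounded by the local Lipschitz assumption in Eq.~\ref{eq:local_lipschitz}, which gives $W_2(\pi_1^r,\pi_2^r)\le L\|z_1^r-z_2^r\|_2=L\varepsilon_r$. This holds provided both $z_1^r$ and $z_2^r$ lie in the neighborhood where the assumption applies; I will state this explicitly as the standing hypothesis behind ``locally''.

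Combining the two bounds gives $e_1+e_2\ge\Delta_a-L\varepsilon_r$. Since $\max(e_1,e_2)\ge\frac12(e_1+e_2)$ and $e_i\ge0$, we obtain
\[
\max_i e_i\ge\tfrac12\max(\Delta_a-L\varepsilon_r,0)=\tfrac12(\Delta_a-L\varepsilon_r)_+ ,
\]
which is Eq.~\ref{eq:soft_aliasing_bound}. Strict positivity when $\Delta_a>L\varepsilon_r$ follows at once: exact matching would mean $e_1=e_2=0$, which is impossible.

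For the final claim, the observation-only case uses $r=\mathrm{obs}$. If $L\varepsilon_{\mathrm{obs}}<\Delta_a$, the bound above rules out simultaneous exact matching.

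For $r=\pi$ I must show that exact matching is consistent with the constraint. Suppose $\pi_i^\pi=\mu_i$ for both $i$. The only constraint linking the two points is $W_2(\pi_1^\pi,\pi_2^\pi)\le L\varepsilon_\pi$, which becomes $\Delta_a\le L\varepsilon_\pi$ and holds by hypothesis.

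This relies on Eq.~\ref{eq:value_cache_geometry}, $\varepsilon_\pi=\sqrt{\varepsilon_{\mathrm{obs}}^2+\|c_1-c_2\|_2^2}\ge\varepsilon_{\mathrm{obs}}$. That identity also shows the interval $(L\varepsilon_{\mathrm{obs}},L\varepsilon_\pi]$ is nonempty exactly when the value caches differ.

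The one subtle point is the phrase ``does not violate this constraint''. I will read it as a pairwise-consistency claim only: the assignment $z_i^\pi\mapsto\mu_i$ satisfies the Lipschitz inequality on the two-point set. I will not claim that a globally $L$-Lipschitz extension to the whole neighborhood exists. If a stronger statement were wanted, I would try a McShane-type extension, which is not straightforward in Wasserstein space. I would remark on this and stay with the pairwise reading, which is the main potential obstacle in the statement.
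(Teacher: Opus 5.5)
Your proposal is correct and follows the paper's proof essentially step for step: the $W_2$ triangle inequality with the Lipschitz bound on the middle term gives $e_1+e_2\ge(\Delta_a-L\varepsilon_r)_+$, then $\max_i e_i\ge(e_1+e_2)/2$ gives the bound. Your pairwise reading of ``exact matching at $z_1^\pi$ and $z_2^\pi$ does not violate this constraint'' matches the paper's own caveat that this is compatibility rather than attainability. One small addition: your side remark that the interval $(L\varepsilon_{\mathrm{obs}},L\varepsilon_\pi]$ is nonempty exactly when the caches differ also needs $L>0$.
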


The formal setup and proof are given in Appendices~\ref{app:soft_aliasing_setup} and~\ref{app:soft_aliasing_proof}.
Because $\varepsilon_\pi\geq\varepsilon_{\mathrm{obs}}$, the value cache can weaken the aliasing lower bound by making the two states easier to separate.

\subsection{Continual Offline-to-Online RL System}

Algorithm~\ref{alg:training_pipeline} in Appendix~\ref{sec:appendix} summarizes the four-stage training loop, with data and objective details in Appendix~\ref{app:training_details}.

\subsubsection{Training Objectives}
\label{sec:training_objectives}

Offline demonstration initialization trains the value expert with the progress objective $\mathcal{L}_{\mathrm{init}}$ in Eq.~\ref{eq:progress_initialization} and the action expert with the behavior-cloning objective $\mathcal{L}_{\mathrm{BC}}$ in Eq.~\ref{eq:flow_matching_objectives}.
From offline RL warmup onward, the value expert uses the TD objective in Eq.~\ref{eq:td_value_learning}, and the action expert uses the advantage-conditioned objective $\mathcal{L}_{\mathrm{FM}}(\theta;o_t)$ in Eq.~\ref{eq:flow_matching_objectives}.
Algorithm~\ref{alg:training_pipeline} provides pseudocode for the complete offline-to-online RL system and denotes these operations by $\operatorname{Init}$, $\operatorname{BC}$, $\operatorname{TD}$, and $\operatorname{FM}$.

\subsubsection{Training Stages}

\paragraph{Offline demonstration initialization.}
Offline expert demonstrations $\mathcal{D}_{\mathrm{demo}}$ initialize $V_{\psi^{(0)}}$ and $\pi_{\theta^{(0)}}$ using the initialization objectives in Section~\ref{sec:training_objectives}.

\paragraph{Offline RL warmup.}
The warmup adapts both experts to deployment-specific states before online RL begins.
We deploy $(\pi_{\theta^{(0)}},V_{\psi^{(0)}})$ to collect policy rollouts and human-intervention (HI) trajectories for training the value expert.
Following the failed-prefix augmentation of GR-RL \citep{li2025grrl}, we construct prefixes from the rollout start to each human takeover.
Each prefix ends with zero reward and $d_t=1$, providing failure supervision for stuck or retry stages.
The complete rollout retains its outcome reward and bootstraps across takeovers to preserve recovery information.
\review{Thanks to the distributional value model, we can represent a mixture of the prefix and complete-rollout TD targets for shared samples.}

We temporally downsample the value-training trajectories to traverse them more times within a fixed update budget.
We sample from $\mathcal{D}_{\mathrm{demo}}$ and the processed warmup rollout data to update the value expert and obtain $V_{\psi_{\mathrm{warm}}}$.

We then annotate warmup trajectories with advantages from $V_{\psi_{\mathrm{warm}}}$ and train the action expert on $\mathcal{D}_{\mathrm{demo}}$ and these annotated trajectories to obtain $\theta_{\mathrm{warm}}$.

\paragraph{Online DAgger.}
Online DAgger co-adapts both experts to the current deployment distribution while human corrections continue to supervise recovery behavior.
With $\theta^{(k)}$ denoting round-$k$ policy parameters, we initialize the online rounds from $(\pi_{\theta^{(0)}},V_{\psi^{(0)}})=(\pi_{\theta_{\mathrm{warm}}},V_{\psi_{\mathrm{warm}}})$.
At each round $k$, we deploy $\pi_{\theta^{(k-1)}}$ with human intervention enabled and annotate the current rollouts with $V_{\psi^{(k-1)}}$.
We train the value expert on current rollouts and replay data comprising $\mathcal{D}_{\mathrm{demo}}$, warmup data, and earlier online rollouts.
We train the action expert on $\mathcal{D}_{\mathrm{demo}}$, historical human-controlled segments, and the annotated current rollouts.
The value and action experts are then updated synchronously, and the current trajectories are incorporated into the historical buffers for the next round.

\paragraph{Learn from Experience.}
LfE continues online improvement using sparse terminal rewards from autonomous rollouts, without collecting new human corrections.
We downsample the complete rollouts for value training and keep the historical intervention buffer fixed to the interventions collected before LfE.
Data mixing, annotation with $V_{\psi^{(k-1)}}$, and synchronous expert updates follow the Online DAgger procedure.

\subsection{Implementation Details}

The action expert inherits the complete architecture and pretrained parameters of $\pi_{0.5}$, including its PaliGemma-2B visual-language backbone and separate 300M-parameter action-generation transformer \citep{black2025pi05}.
The value expert repurposes the original $\pi_{0.5}$ 300M-parameter action-generation transformer as its visual-language backbone.
Following Multi-Scale Embodied Memory (MEM) \citep{torne2026mem}, the action and value experts use temporal-spatial attention to encode their respective image histories $h_t^\pi$ and $h_t^V$, each ending at observation timestep $t$.
\review{The action expert encodes past value predictions and proprioceptive states with a gated recurrent unit (GRU) and appends its final hidden state after the vision tokens.}
To predict categorical task progress, we append a learnable readout token to the end of the value backbone's input sequence after temporal-spatial history processing.
A linear head maps the readout token's output representation to $K$ logits, and a softmax produces the categorical value distribution over the fixed support in Sec.~\ref{sec:preliminaries}.
\review{Specific values of key hyperparameters are provided in Appendix~\ref{app:hyperparameters}.}

\section{Experiments}

\setlength{\parskip}{2pt}
\label{sec:experiments}

Our experiments aim to answer the following questions:
\begin{enumerate}
    \renewcommand{\labelenumi}{RQ\,\arabic{enumi}.}
    \item How much does value caching improve policy performance in RL post-training?
    \label{q:value_caching_performance}
    \item Why does value caching improve policy performance?
    \label{q:value_caching_mechanism}
    \item How much does VaM's offline-to-online RL system enhance sample efficiency?
    \label{q:offline_to_online_efficiency}
\end{enumerate}

\subsection{Tasks and Evaluation Protocol}
\label{subsec:experimental_setup}

We consider three real-world manipulation tasks, illustrated in Fig.~\ref{fig:task_illustrations}: \textit{Box Assembly}, \textit{Laundry Folding}, and \textit{Magnetic Bus Assembly}.
All three tasks are long-horizon manipulation workflows that cover contact-rich assembly and deformable-object manipulation.
Appendix~\ref{app:task_procedures} provides the detailed procedure and success criterion for each task.

\begin{figure}[!htbp]
    \setlength{\intextsep}{4pt}
    \setlength{\abovecaptionskip}{3pt}
    \setlength{\belowcaptionskip}{1pt}
    \centering
    \includegraphics[width=\textwidth]{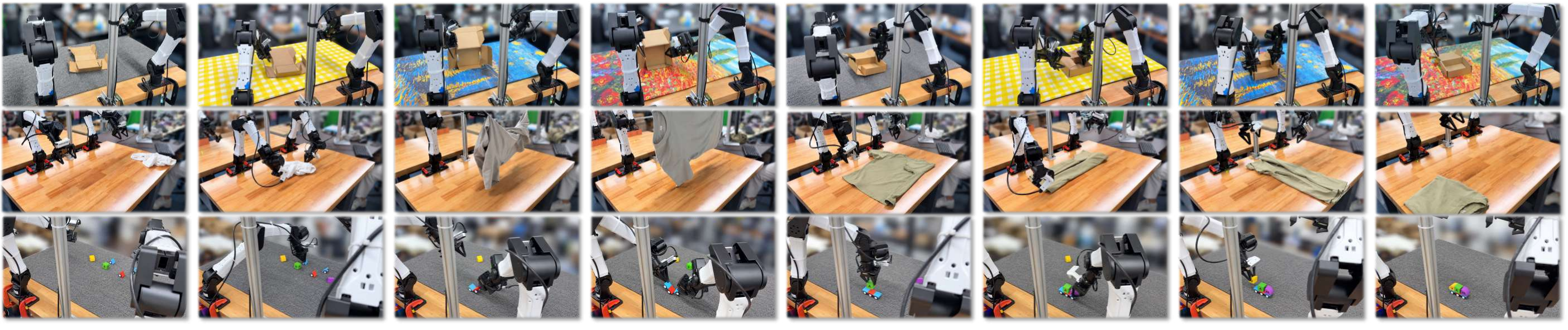}
    \caption{\textbf{Task illustrations for \textit{Box Assembly} (top), \textit{Laundry Folding} (middle), and \textit{Magnetic Bus Assembly} (bottom).}}
    \label{fig:task_illustrations}
\end{figure}

\paragraph{Evaluation protocol.}
We report success rate (SR), defined as the fraction of evaluation episodes completed successfully within the task-specific time limit.
For the \textit{Offline RL} and \textit{Online RL} comparison, we additionally report experimental wall-clock time per gradient step.
The real-robot setup and task-specific data-collection budgets are detailed in Appendix~\ref{app:experiment_setup} and \mbox{Table~\ref{tab:task_data_budgets}}.
\review{Detailed hyperparameters for the evaluation protocol are provided in Appendix~\ref{app:hyperparameters}.}

\Needspace{4\baselineskip}
\subsection{How Much Does Value Caching Improve Policy Performance in RL Post-Training?}
\label{subsec:value_caching_performance}

To answer RQ~\ref{q:value_caching_performance}, Fig.~\ref{fig:online_rl_performance} compares VaM with ablations that remove value caching or the offline RL warmup, and with RECAP \citep{physicalintelligence2025pi06}.
VaM and its ablation without value caching start from policies warmed up using their respective rollout and human-intervention trajectories.
\review{VaM without offline RL warmup and RECAP start from policies trained on $\mathcal{D}_{\mathrm{demo}}$ and receive additional Online DAgger rounds to compensate for the other methods' warmup data.}

\begin{figure}[!htbp]
    \setlength{\intextsep}{4pt}
    \setlength{\abovecaptionskip}{3pt}
    \setlength{\belowcaptionskip}{1pt}
    \centering
    \includegraphics[width=\textwidth]{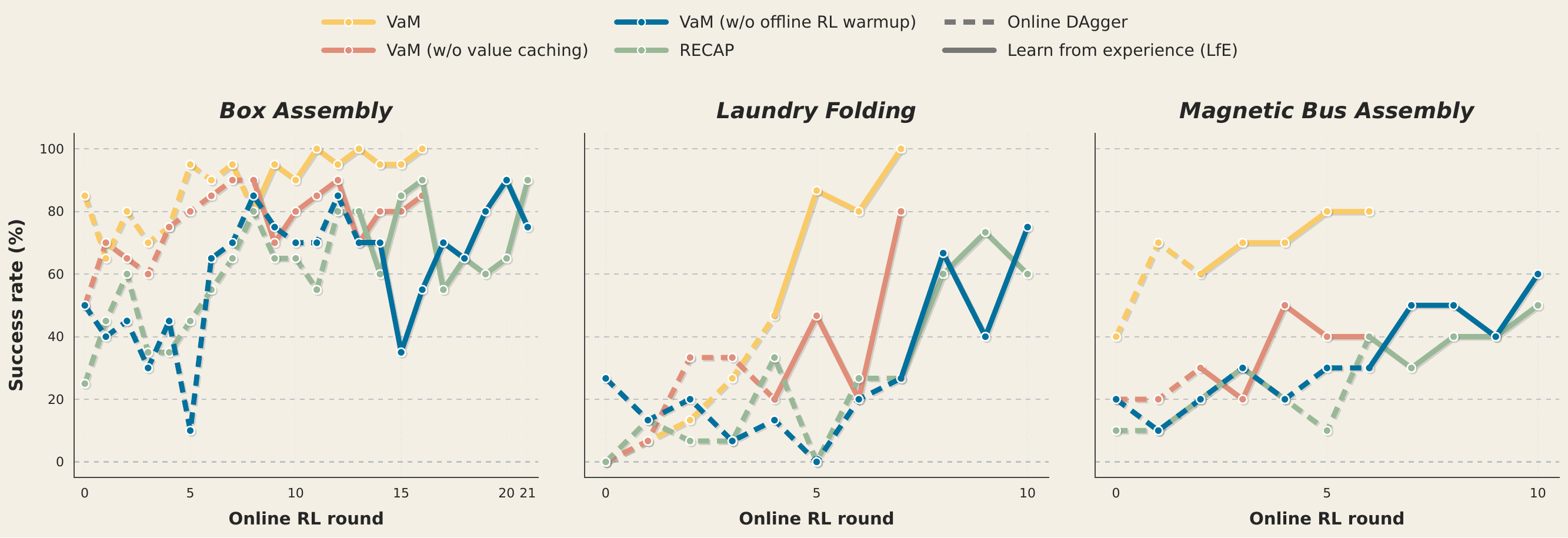}
    \caption{\textbf{Across \textit{Box Assembly}, \textit{Laundry Folding}, and \textit{Magnetic Bus Assembly}, value caching yields a mean absolute success-rate gain of $29.4\%$.}}
    \label{fig:online_rl_performance}
\end{figure}

Across all three tasks, VaM exceeds the ablation without value caching by $29.4\%$ in mean success rate over the final three checkpoints (Table~\ref{tab:online_rl_final_three}).

\Needspace{15\baselineskip}
\begin{wraptable}[15]{r}{0.58\textwidth}
    \vspace{-1pt}
    \centering
    \caption{\textbf{Success rate over the final three online RL checkpoints.}
    Entries are mean $\pm$ sample standard deviation across the final three online RL checkpoints.
    \review{By replacing on-policy value estimates with counterfactual estimates at inference, the bottom block demonstrates the importance of on-policy values for effective value caching.}}
    \label{tab:online_rl_final_three}
    \scriptsize
    \setlength{\tabcolsep}{2.2pt}
    \resizebox{\linewidth}{!}{%
    \begin{tabular}{@{}lccc@{}}
        \toprule
        \review{\textbf{Method}} & \review{\shortstack{\textbf{\textit{Box}}\\\textbf{\textit{Assembly}} $(\%)$}} & \review{\shortstack{\textbf{\textit{Laundry}}\\\textbf{\textit{Folding}} $(\%)$}} & \review{\shortstack{\textbf{\textit{Magnetic Bus}}\\\textbf{\textit{Assembly}} $(\%)$}} \\
        \midrule
        \textbf{VaM} & $\mathbf{96.7}\,{\color{gray}\pm 2.9}$ & $\mathbf{88.9}\,{\color{gray}\pm 10.2}$ & $\mathbf{76.7}\,{\color{gray}\pm 5.8}$ \\
        \textbf{VaM (w/o value caching)} & $81.7\,{\color{gray}\pm 2.9}$ & $48.9\,{\color{gray}\pm 30.1}$ & $43.3\,{\color{gray}\pm 5.8}$ \\
        \textbf{VaM (w/o offline RL warmup)} & $81.7\,{\color{gray}\pm 7.6}$ & $60.6\,{\color{gray}\pm 18.3}$ & $50.0\,{\color{gray}\pm 10.0}$ \\
        \textbf{RECAP} & $71.7\,{\color{gray}\pm 16.1}$ & $64.4\,{\color{gray}\pm 7.7}$ & $43.3\,{\color{gray}\pm 5.8}$ \\
        \midrule
        \reviewtablerows{exp_fig/analysis/review-evidence/outputs/replacement-main-rows.tex}
        \bottomrule
    \end{tabular}%
    }
\end{wraptable}

The gap is most pronounced during LfE and persists after the policy stops receiving on-policy DAgger data.
\review{Averaging all available LfE checkpoints yields a consistent gap of $31.3\%$ (full results in Appendix~\ref{app:checkpoint_sensitivity}).}
\review{These observations support the use of cached task-progress estimates during LfE.}
\review{VaM also exceeds its ablation without offline RL warmup on all three tasks, supporting the benefit of offline RL warmup for final performance.}

\paragraph{\review{Importance of on-policy value estimates.}}
\review{To probe the importance of accurate, on-policy estimates for value caching, we retain the final three LfE action experts and replace only their value experts with snapshots obtained after progress initialization on $\mathcal{D}_{\mathrm{demo}}$ or offline RL warmup.}
\review{These substitutions provide counterfactual value estimates while preserving the value caching mechanism across all three tasks.}
\review{Both earlier value snapshots yield lower success rates than the original LfE value experts on all three tasks (Table~\ref{tab:online_rl_final_three}), supporting the importance of value estimates adapted to the current policy for effective value caching.}
\review{Appendix~\ref{app:value_replacement} details the replacement protocol.}

\subsection{Why Does Value Caching Improve Policy Performance?}
\label{subsec:value_caching_mechanism}

\review{To answer RQ~\ref{q:value_caching_mechanism}, Fig.~\ref{fig:value_caching_mechanism} compares principal component analysis (PCA) projections of value-expert embeddings, value predictions, and robot behavior on \textit{Box Assembly}.}
\review{In the displayed cached rollouts, PCA projection trajectories are smoother and ordered by progress, with deviations at stuck states preceding corrective actions.}
\review{Without caching, stages are more intermingled, with low or fluctuating values during failed execution.}
\review{Appendix~\ref{app:value_caching_qualitative} extends these visualizations to \textit{Laundry Folding} and \textit{Magnetic Bus Assembly} and details a \textit{Laundry Folding} comparison probing partial observability: we hold observations fixed and compare policy predictions using demo-trained and post-warmup value replacements following Sec.~\ref{subsec:value_caching_performance}.}

\begin{figure}[t]
    \centering
    \setlength{\parskip}{0pt}
    \includegraphics[width=0.47\textwidth]{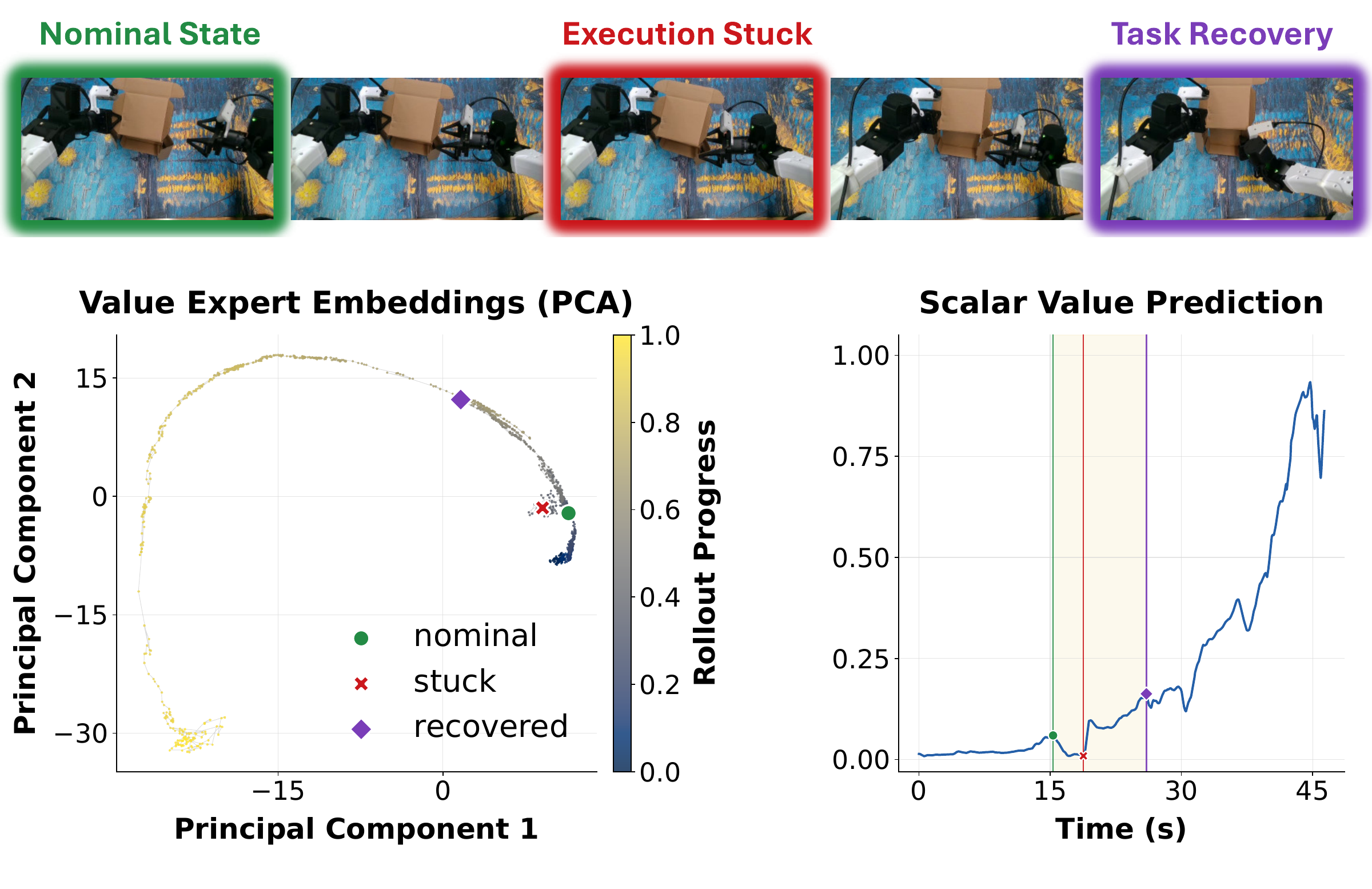}%
    \hspace{0.01\textwidth}%
    \includegraphics[width=0.47\textwidth]{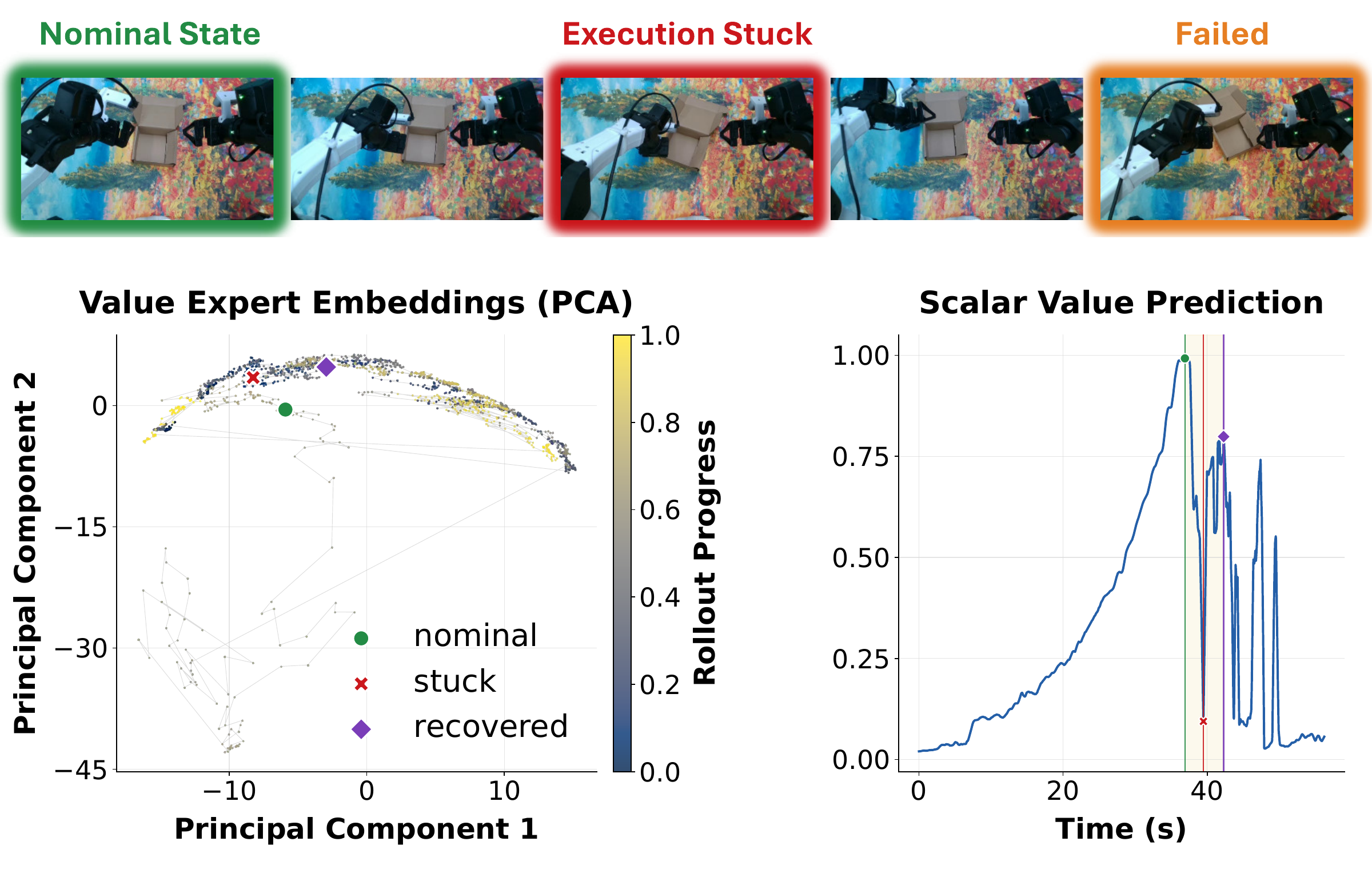}
    \par\vspace{1pt}\nointerlineskip
    \includegraphics[width=0.47\textwidth]{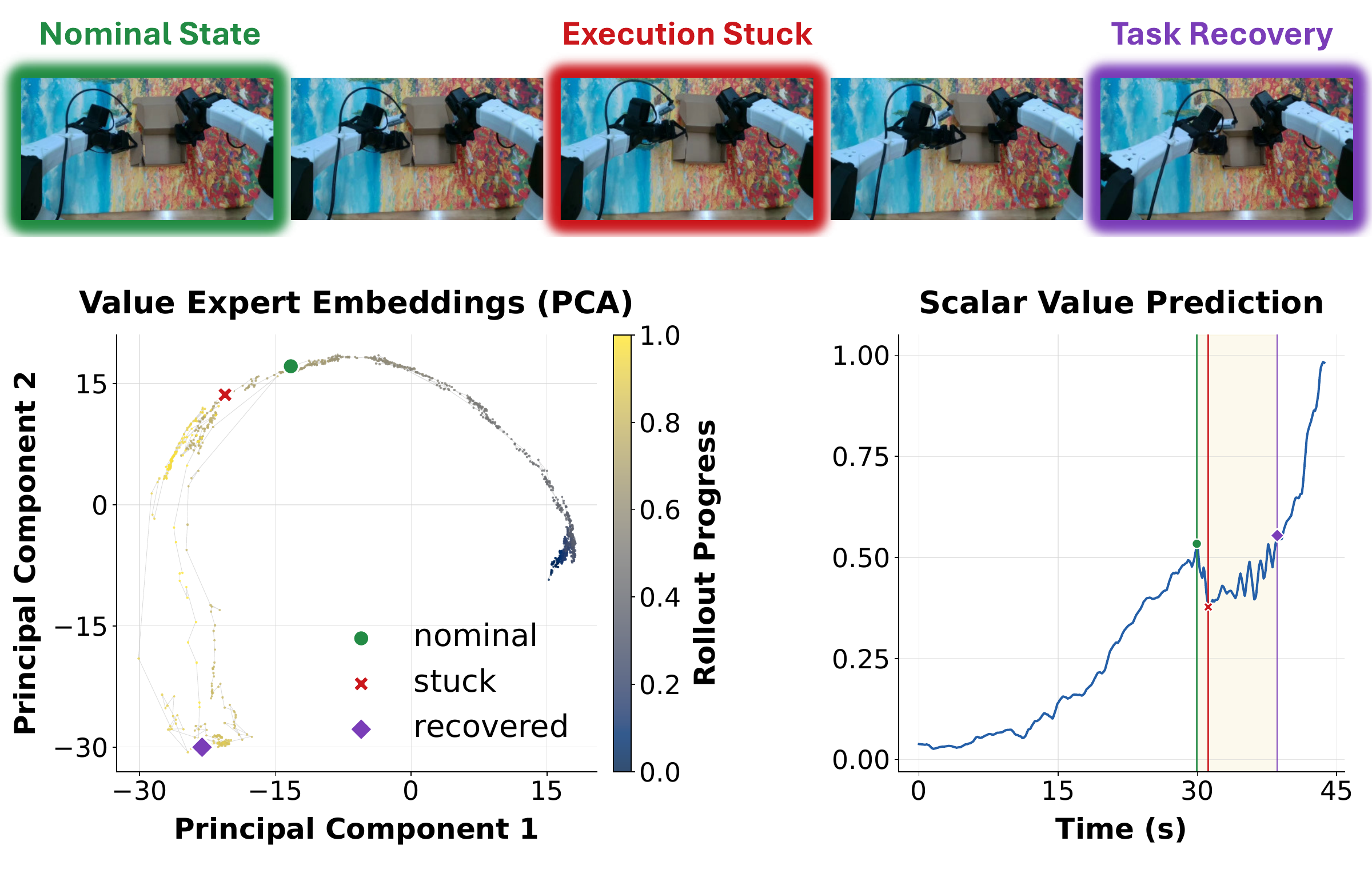}%
    \hspace{0.01\textwidth}%
    \includegraphics[width=0.47\textwidth]{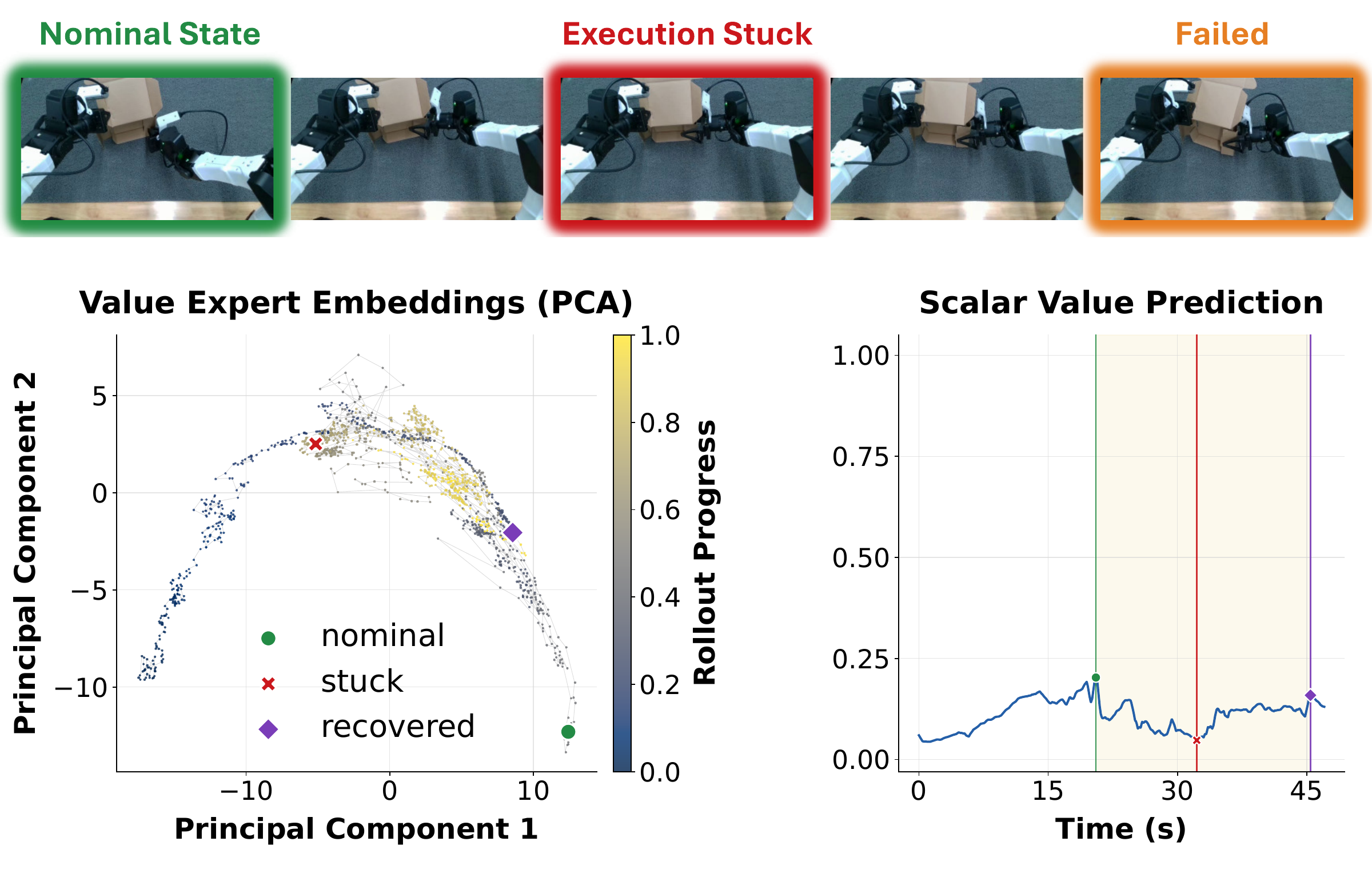}
    \par\vspace{1pt}\nointerlineskip
    \includegraphics[width=0.47\textwidth]{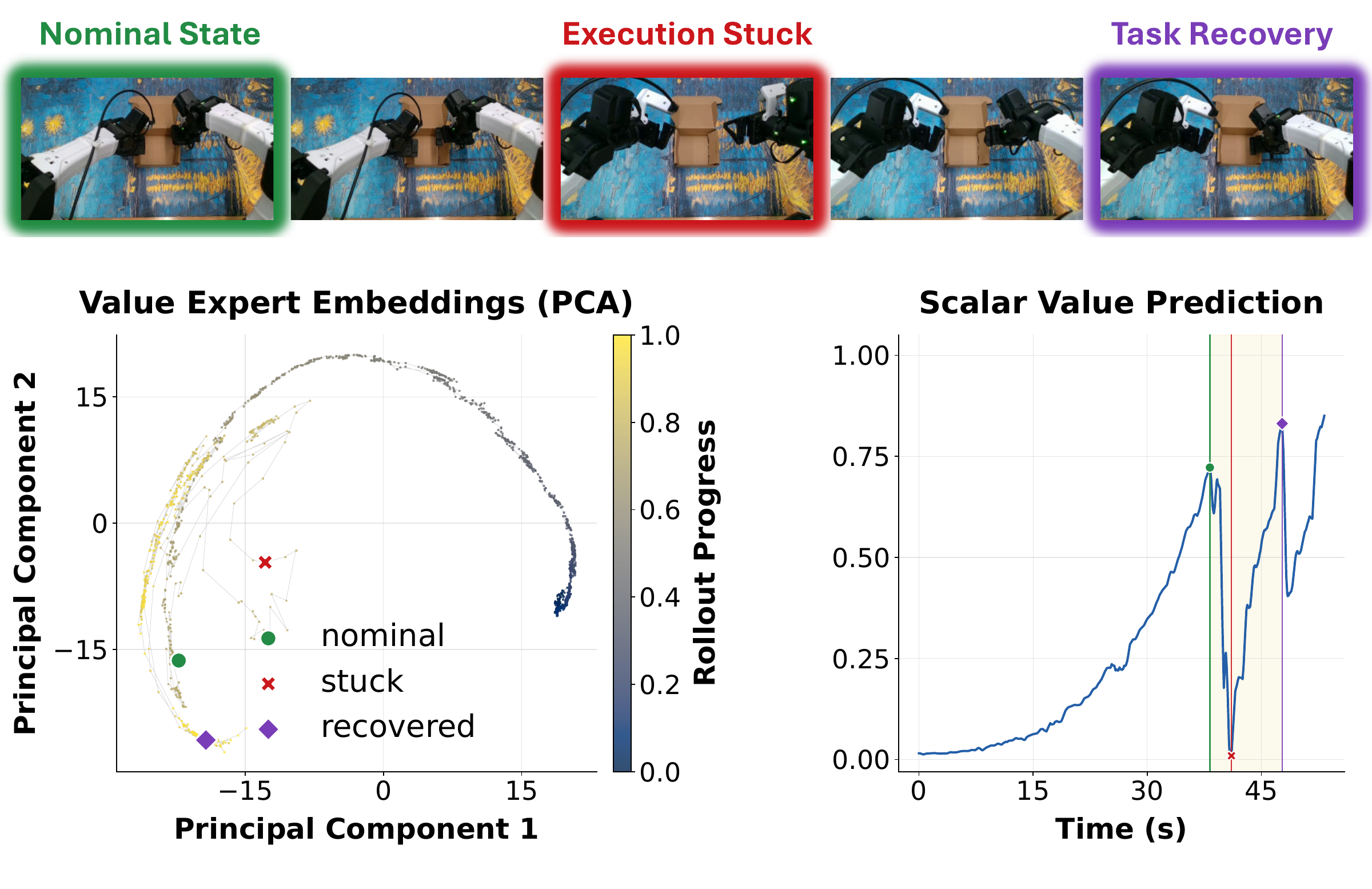}%
    \hspace{0.01\textwidth}%
    \includegraphics[width=0.47\textwidth]{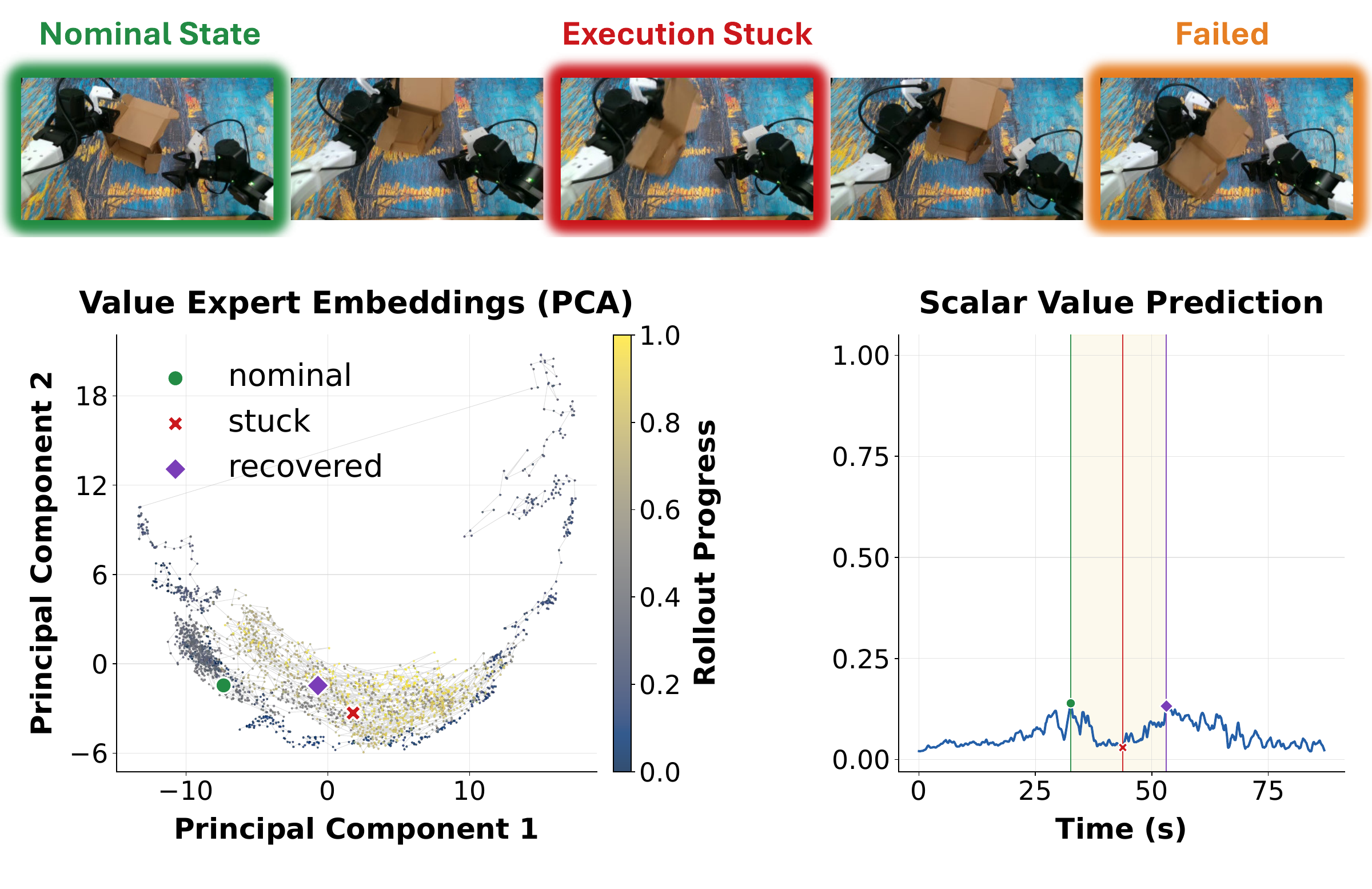}
    \caption{\textbf{Value caching supports progress-aware recovery: VaM (left) versus VaM (w/o value caching) (right) on \textit{Box Assembly}.}
    We illustrate the eye-to-hand camera images, PCA embeddings of the value expert colored by rollout progress, and scalar value predictions.}
    \label{fig:value_caching_mechanism}
\end{figure}

\paragraph{Recovery behavior during LfE.}
\review{To assess corrective behavior, we conduct blind testing on all LfE rollouts from the four methods, where human annotators do not know which method produced each rollout.}

\begin{wraptable}[\ifdefined\arxivversion 6\else 8\fi]{r}{0.52\textwidth}
    \vspace{-14pt}
    \centering
    \caption{\textbf{Recovery rate during LfE.}}
    \label{tab:recovery_rate}
    \scriptsize
    \setlength{\tabcolsep}{2.5pt}
    \resizebox{\linewidth}{!}{%
    \begin{tabular}{@{}lccc@{}}
        \toprule
        \review{\textbf{Method}} & \review{\shortstack{\textbf{\textit{Box}}\\\textbf{\textit{Assembly}} $(\%)$}} & \review{\shortstack{\textbf{\textit{Laundry}}\\\textbf{\textit{Folding}} $(\%)$}} & \review{\shortstack{\textbf{\textit{Magnetic Bus}}\\\textbf{\textit{Assembly}} $(\%)$}} \\
        \midrule
        \textbf{VaM} & $\mathbf{54.5}$ & $\mathbf{86.4}$ & $\mathbf{33.3}$ \\
        \textbf{VaM (w/o value caching)} & $50.0$ & $58.8$ & $7.4$ \\
        \textbf{VaM (w/o offline RL warmup)} & $25.3$ & $72.7$ & \review{$27.3$} \\
        \textbf{RECAP} & $42.4$ & $76.6$ & \review{$13.3$} \\
        \bottomrule
    \end{tabular}%
    }
\end{wraptable}

They count events requiring recovery and successful recoveries in which the policy executes corrective actions to complete the affected stage.
We define \textit{Recovery Rate} as the ratio of successful recoveries to recovery-requiring events.

\review{VaM exceeds its caching ablation in observed recovery rate on all three tasks (Table~\ref{tab:recovery_rate}), with detailed experimental settings and event counts in Appendix~\ref{app:recovery_counts}.}
\review{The qualitative separation is consistent with the conditional soft-aliasing analysis in Sec.~\ref{sec:soft_aliasing_analysis}.}

\subsection{How Much Does \textnormal{VaM}'s Offline-to-Online RL System Enhance Sample Efficiency?}
\label{subsec:offline_to_online_efficiency}

\review{To answer RQ~\ref{q:offline_to_online_efficiency}, we compare \textit{Offline RL} and \textit{Online RL}, initialized from the same VaM model after offline RL warmup, under matched gradient-step and data-collection budgets.}
\review{Figures~\ref{fig:offline_online_success} and~\ref{fig:offline_online_gradient_time} report success rate over the final three checkpoints of each stage and experimental time per gradient step.}

\begin{figure}[!htbp]
    \setlength{\intextsep}{4pt}
    \setlength{\abovecaptionskip}{3pt}
    \setlength{\belowcaptionskip}{1pt}
    \centering
    \begin{minipage}[t]{0.49\textwidth}
        \centering
        \includegraphics[height=0.524\linewidth]{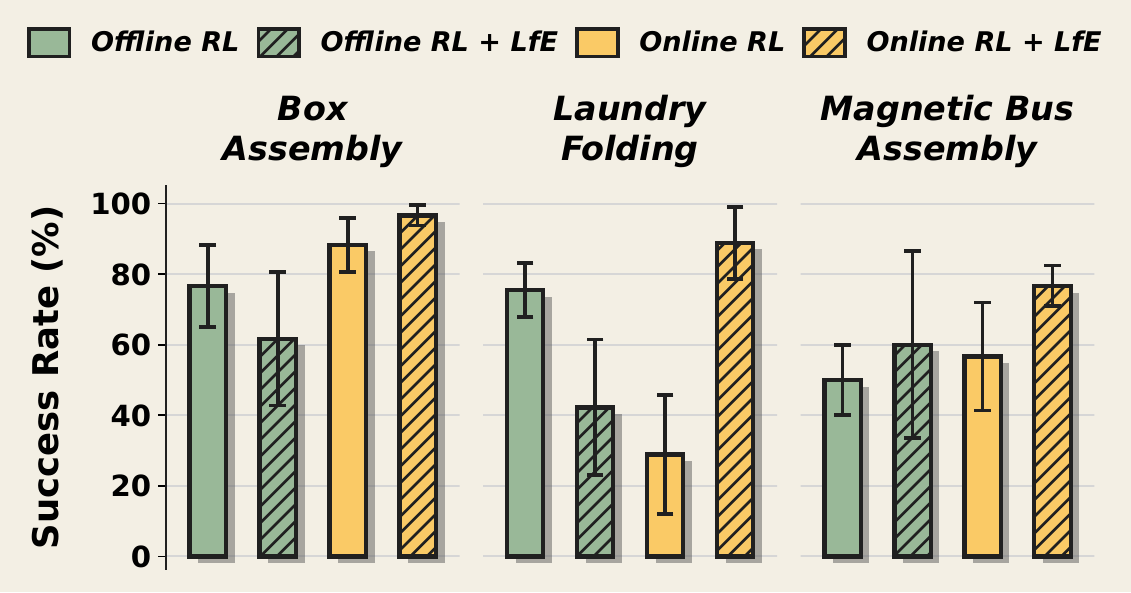}
        \caption{\textbf{Across all tasks, VaM yields an average absolute success-rate gain of $32.8\%$ over matched \textit{Offline RL} baselines after LfE.}
        \review{Bars show mean $\pm$ sample standard deviation.}}
        \label{fig:offline_online_success}
    \end{minipage}
    \hfill
    \begin{minipage}[t]{0.49\textwidth}
        \centering
        \includegraphics[height=0.524\linewidth]{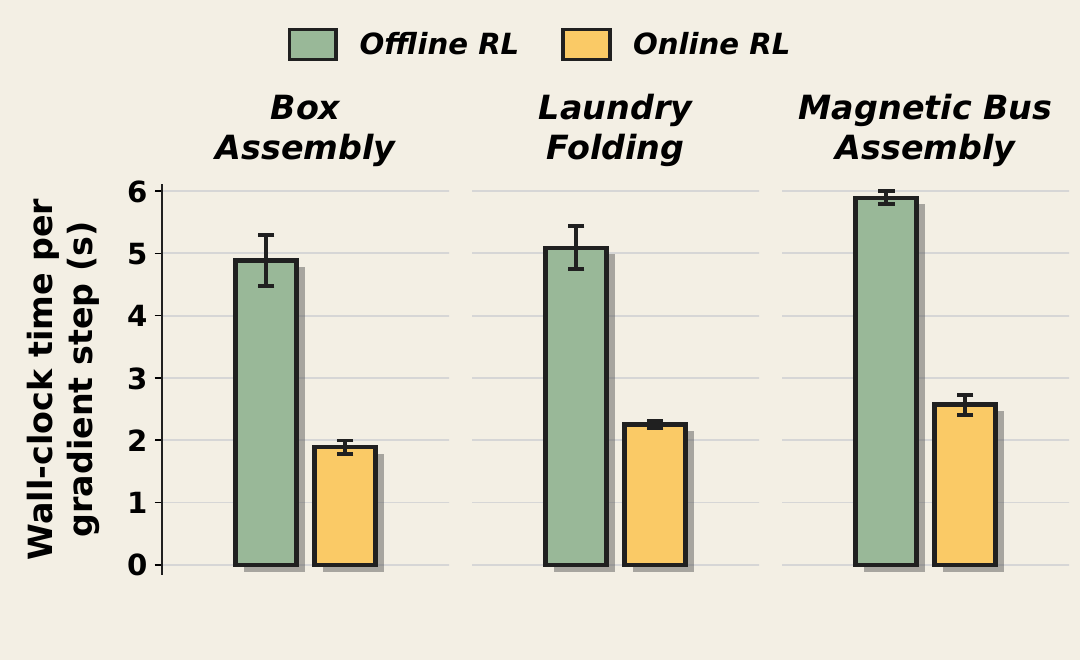}
        \caption{\review{\textbf{Across all tasks, \textit{Online RL} reduces experimental time per gradient step by $57.8\%$ on average.}}
        \review{Bars show mean $\pm$ sample standard deviation.}}
        \label{fig:offline_online_gradient_time}
    \end{minipage}
\end{figure}

\review{After LfE, VaM exceeds \textit{Offline RL} on all three tasks by $32.8\%$ on average, supporting online closed-loop policy updates under the evaluated budgets.}
\review{VaM reduces mean experimental time per gradient step by $57.8\%$ across tasks by running data collection, value training, and action training concurrently, while \textit{Offline RL} performs them sequentially.}
\review{For example, on \textit{Box Assembly}, \textit{Online RL} uses 16 iterations versus 2 for \textit{Offline RL} under matched gradient-step budgets (detailed experimental settings in Appendix~\ref{app:time_accounting}).}

\setlength{\parskip}{3pt}

\section{Related Work}

\subsection{Post-training of Generalist Robot Policies}

Policy-centric post-training adapts generalist robot policies through residual updates or RL objectives tailored to consistency, diffusion, and flow-based policies \citep{xiao2026pld,chen2025conrft,li2026simplevlarl,chen2025pirl}.
Likelihood-based, hierarchical, and latent-space formulations support optimization of generative policies \citep{ren2025diffusion,lei2026rl100,wagenmaker2025dsrl,li2025grrl,chen2025pirl}, while world models reduce physical interaction by providing imagined or synthetic experience \citep{yang2026rise,zhu2026wmpo,gigabrainteam2026gigabrain05m}.
To learn from a robot's own experience, some prior methods learn an additional value function to filter out suboptimal segments.
General-purpose reward and critic models learn transferable evaluations from robot trajectories or vision-language supervision \citep{liang2026robometer,tan2026robodopamine,zhai2026vlac}, while advantage-oriented methods derive relative progress from temporal relations in demonstrations and trajectories \citep{liu2026steam,yu2026chi0,zhang2025rewind}.
RECAP combines demonstrations, policy rollouts, and human interventions to estimate value and annotate data for advantage-conditioned policy extraction through CFGRL \citep{physicalintelligence2025pi06,frans2025diffusion}.
However, learning value estimates from fixed or periodically refreshed datasets can leave the critic stale as the deployment distribution changes between updates.
In addition, when value predictions serve only as rewards, filtering scores, annotations, or optimization weights, the deployed policy does not receive them as inputs.
Without this explicit task-progress information, visually similar observations can obscure different stages of execution, leaving the policy with partial observability of task progress.
VaM instead continually updates value from online rollouts and human interventions and uses its predictions both as policy-learning supervision and an explicit policy input.

\subsection{Human-in-the-loop Online Adaptation}

Human operators intervene during deployment to correct uncertain or failing policy behavior \citep{luo2025humanagent,deng2026e2hil,hu2026rac}.
Methods such as HG-DAgger and Sirius use interventions as corrective demonstrations for policy improvement \citep{kelly2019hgdagger,liu2023sirius}.
Fleet-DAgger, SiriusFleet, and ARMADA scale intervention collection by allocating operators or requesting assistance only when necessary \citep{hoque2023fleetdagger,liu2025siriusfleet,yu2026armada}, while recent fleet-scale systems combine online robot rollouts with human corrections across multiple robots and tasks \citep{pan2026sop,wang2026learningwhiledeploying}.
VaM further uses intervention trajectories to update a value model that guides policy learning and conditions the policy input.

\subsection{Offline-to-Online Reinforcement Learning}

Offline-to-online RL stabilizes policy improvement by retaining or replaying offline data during online training \citep{ball2023rlpd,peng2019advantage,nair2020awac,li2025grrl}.
RLPD mixes offline and online samples, Cal-QL calibrates an offline value function for online fine-tuning, and Warm-Start RL initializes online learning with replay data that preserves pretrained behavior \citep{ball2023rlpd,nakamoto2023calql,zhou2025efficient}.
In real-world robotics, SERL and HIL-SERL combine behavior-cloning initialization with off-policy learning for physical manipulation \citep{luo2024serl,luo2025hilserl}.
VaM combines offline demonstrations, historical off-policy intervention data, and current on-policy experience for sample-efficient online adaptation.

\section{Discussions and Conclusions}

Though VaM demonstrates improvement over prior methods in sample efficiency and mitigates partial observability in generalist robot policies, it has several limitations that suggest directions for future work.
First, value caching depends on a value function that closely tracks actual task progress, making offline RL warmup necessary for stable online training.
The cached history consists of the value expert's own predictions, so its usefulness as task-progress memory depends on the quality of those predictions.
Second, our real-world RL framework is currently limited to a single robot.
Our experiments do not establish how the framework scales when multiple robots contribute data.
Scaling VaM to robot fleets with shared experience is a promising direction for further improving sample efficiency.

In conclusion, we presented VaM, a framework that uses recent value predictions as task-progress memory and combines offline demonstrations, off-policy trajectories, and on-policy rollouts for online policy and value adaptation.
Value caching addresses action ambiguity under partial observability, while the offline-to-online RL pipeline supports continued learning from deployment experience.
Together, these components offer a practical approach to post-training generalist robot policies.

\clearpage
\setlength{\parskip}{6pt}
\setlength{\textfloatsep}{8pt}
\setlength{\abovedisplayskip}{7pt plus 2pt minus 5pt}
\setlength{\belowdisplayskip}{7pt plus 2pt minus 5pt}
\flushbottom
\subsection*{AI use statement}

We used generative artificial intelligence (AI) tools to polish writing, assist with research execution, and draft sections of the paper.
We did not use these tools for literature retrieval or discovery, research ideation, synthetic dataset generation, or proving mathematical claims.
The authors take responsibility for the final content of this paper.

\subsection*{Reproducibility statement}

All code will be released as open source and is currently being organized for release.

%
%

%

\bibliography{iclr2027_conference}

@inproceedings{chen2025conrft,
  title     = {{ConRFT}: A Reinforced Fine-tuning Method for {VLA} Models via Consistency Policy},
  author    = {Chen, Yuhui and Tian, Shuai and Liu, Shugao and Zhou, Yingting and Li, Haoran and Zhao, Dongbin},
  booktitle = {Proceedings of Robotics: Science and Systems},
  year      = {2025},
  address   = {Los Angeles, CA, USA},
  month     = jun,
  doi       = {10.15607/RSS.2025.XXI.019}
}

@inproceedings{li2026simplevlarl,
  title     = {{SimpleVLA-RL}: Scaling {VLA} Training via Reinforcement Learning},
  author    = {Li, Haozhan and Zuo, Yuxin and Yu, Jiale and Zhang, Yuhao and Yang, Zhaohui and Zhang, Kaiyan and Zhu, Xuekai and Zhang, Yuchen and Chen, Tianxing and Cui, Ganqu and Wang, Dehui and Luo, Dingxiang and Fan, Yuchen and Sun, Youbang and Zeng, Jia and Pang, Jiangmiao and Zhang, Shanghang and Wang, Yu and Mu, Yao and Zhou, Bowen and Ding, Ning},
  booktitle = {International Conference on Learning Representations},
  year      = {2026},
  eprint    = {2509.09674},
  archivePrefix = {arXiv},
  primaryClass  = {cs.RO}
}

@inproceedings{xiao2026pld,
  title     = {Self-Improving Vision-Language-Action Models with Data Generation via Residual {RL}},
  author    = {Xiao, Wenli and Lin, Haotian and Peng, Andy and Xue, Haoru and He, Tairan and Xie, Yuqi and Hu, Fengyuan and Wu, Jimmy and Luo, Zhengyi and Fan, Linxi Jim and Shi, Guanya and Zhu, Yuke},
  booktitle = {International Conference on Learning Representations},
  year      = {2026},
  eprint    = {2511.00091},
  archivePrefix = {arXiv},
  primaryClass  = {cs.RO}
}

@article{chen2025pirl,
  title   = {{$\pi_{\mathrm{RL}}$}: Online {RL} Fine-tuning for Flow-based Vision-Language-Action Models},
  author  = {Chen, Kang and Liu, Zhihao and Zhang, Tonghe and Guo, Zhen and Xu, Si and Lin, Hao and Zang, Hongzhi and Li, Xiang and Zhang, Quanlu and Yu, Zhaofei and Fan, Guoliang and Huang, Tiejun and Wang, Yu and Yu, Chao},
  journal = {arXiv preprint arXiv:2510.25889},
  year    = {2025},
  eprint  = {2510.25889},
  archivePrefix = {arXiv},
  primaryClass  = {cs.RO}
}

@inproceedings{zhai2026vlac,
  title     = {A Vision-Language-Action-Critic Model for Robotic Real-World Reinforcement Learning},
  author    = {Zhai, Shaopeng and Zhang, Qi and Zhang, Tianyi and Huang, Fuxian and Zhang, Haoran and Zhou, Ming and Zhang, Shengzhe and Liu, Litao and Lin, Sixu and Pang, Jiangmiao},
  booktitle = {International Conference on Machine Learning},
  year      = {2026},
  eprint    = {2509.15937},
  archivePrefix = {arXiv},
  primaryClass  = {cs.RO}
}

@article{liu2026steam,
  title   = {{STEAM}: Self-Supervised Temporal Ensemble Advantage Modeling for Real-World Robot Learning},
  author  = {Liu, Zhihao and Gu, Qiuyi and Wang, Yitao and Qiao, Dongming and Zhang, Yixian and Chen, Shuaihang and Shi, Liangzhi and Zhou, Tianxing and Huang, Zefang and Chen, Kang and Guo, Zhen and Zhang, Quanlu and Yu, Jincheng and Liang, Xiaodan and Fan, Guoliang and Wang, Yu and Gao, Feng and Chen, Xinlei and Yu, Chao},
  journal = {arXiv preprint arXiv:2606.29834},
  year    = {2026},
  eprint  = {2606.29834},
  archivePrefix = {arXiv},
  primaryClass  = {cs.RO}
}

@article{physicalintelligence2025pi06,
  title   = {{$\pi^{*}_{0.6}$}: a {VLA} That Learns From Experience},
  author  = {{Physical Intelligence} and Amin, Ali and Aniceto, Raichelle and Balakrishna, Ashwin and Black, Kevin and Conley, Ken and Connors, Grace and Darpinian, James and Dhabalia, Karan and DiCarlo, Jared and Driess, Danny and Equi, Michael and Esmail, Adnan and Fang, Yunhao and Finn, Chelsea and Glossop, Catherine and Godden, Thomas and Goryachev, Ivan and Groom, Lachy and Hancock, Hunter and Hausman, Karol and Hussein, Gashon and Ichter, Brian and Jakubczak, Szymon and Jen, Rowan and Jones, Tim and Katz, Ben and Ke, Liyiming and Kuchi, Chandra and Lamb, Marinda and LeBlanc, Devin and Levine, Sergey and Li-Bell, Adrian and Lu, Yao and Mano, Vishnu and Mothukuri, Mohith and Nair, Suraj and Pertsch, Karl and Ren, Allen Z. and Sharma, Charvi and Shi, Lucy Xiaoyang and Smith, Laura and Springenberg, Jost Tobias and Stachowicz, Kyle and Stoeckle, Will and Swerdlow, Alex and Tanner, James and Torne, Marcel and Vuong, Quan and Walling, Anna and Wang, Haohuan and Williams, Blake and Yoo, Sukwon and Yu, Lili and Zhilinsky, Ury and Zhou, Zhiyuan},
  journal = {arXiv preprint arXiv:2511.14759},
  year    = {2025},
  eprint  = {2511.14759},
  archivePrefix = {arXiv},
  primaryClass  = {cs.RO}
}

@article{yu2026chi0,
  title   = {{$\chi_0$}: Resource-Aware Robust Manipulation via Taming Distributional Inconsistencies},
  author  = {Yu, Checheng and Sima, Chonghao and Jiang, Gangcheng and Zhang, Hai and Mai, Haoguang and Li, Hongyang and Wang, Huijie and Chen, Jin and Wu, Kaiyang and Chen, Li and Zhao, Lirui and Shi, Modi and Luo, Ping and Bu, Qingwen and Peng, Shijia and Li, Tianyu and Yuan, Yibo},
  journal = {arXiv preprint arXiv:2602.09021},
  year    = {2026},
  eprint  = {2602.09021},
  archivePrefix = {arXiv},
  primaryClass  = {cs.RO}
}

@article{gigabrainteam2026gigabrain05m,
  title   = {{GigaBrain-0.5M*}: a {VLA} That Learns From World Model-Based Reinforcement Learning},
  author  = {{GigaBrain Team} and Wang, Boyuan and Li, Bohan and Ni, Chaojun and Huang, Guan and Zhao, Guosheng and Li, Hao and Li, Jie and Lv, Jindi and Liu, Jingyu and Feng, Lv and Yu, Mingming and Li, Peng and Deng, Qiuping and Liu, Tianze and Zhou, Xinyu and Chen, Xinze and Wang, Xiaofeng and Wang, Yang and Li, Yifan and Nie, Yifei and Li, Yilong and Zhou, Yukun and Ye, Yun and Liu, Zhichao and Zhu, Zheng},
  journal = {arXiv preprint arXiv:2602.12099},
  year    = {2026},
  eprint  = {2602.12099},
  archivePrefix = {arXiv},
  primaryClass  = {cs.RO}
}

@inproceedings{yang2026rise,
  title     = {{RISE}: Self-Improving Robot Policy with Compositional World Model},
  author    = {Yang, Jiazhi and Lin, Kunyang and Li, Jinwei and Zhang, Wencong and Lin, Tianwei and Wu, Longyan and Su, Zhizhong and Zhao, Hao and Zhang, Ya-Qin and Chen, Li and Luo, Ping and Yue, Xiangyu and Li, Hongyang},
  booktitle = {Proceedings of Robotics: Science and Systems},
  year      = {2026},
  eprint    = {2602.11075},
  archivePrefix = {arXiv},
  primaryClass  = {cs.RO}
}

@inproceedings{zhu2026wmpo,
  title     = {{WMPO}: World Model-based Policy Optimization for Vision-Language-Action Models},
  author    = {Zhu, Fangqi and Yan, Zhengyang and Hong, Zicong and Shou, Quanxin and Ma, Xiao and Guo, Song},
  booktitle = {International Conference on Learning Representations},
  year      = {2026},
  eprint    = {2511.09515},
  archivePrefix = {arXiv},
  primaryClass  = {cs.RO}
}

@article{hu2026rac,
  title   = {{RaC}: Robot Learning for Long-Horizon Tasks by Scaling Recovery and Correction},
  author  = {Hu, Zheyuan and Wu, Robyn and Enock, Naveen and Li, Jasmine and Kadakia, Riya and Erickson, Zackory and Kumar, Aviral},
  journal = {IEEE Transactions on Robotics},
  volume  = {42},
  pages   = {2843--2860},
  year    = {2026},
  eprint  = {2509.07953},
  archivePrefix = {arXiv},
  primaryClass  = {cs.RO}
}

@article{pan2026sop,
  title   = {{SOP}: A Scalable Online Post-Training System for Vision-Language-Action Models},
  author  = {Pan, Mingjie and Feng, Siyuan and Zhang, Qinglin and Li, Xinchen and Song, Jianheng and Qu, Chendi and Wang, Yi and Li, Chuankang and Xiong, Ziyu and Chen, Zhi and Liu, Yi and Luo, Jianlan},
  journal = {arXiv preprint arXiv:2601.03044},
  year    = {2026},
  eprint  = {2601.03044},
  archivePrefix = {arXiv},
  primaryClass  = {cs.RO}
}

@article{wang2026learningwhiledeploying,
  title   = {Learning while Deploying: Fleet-Scale Reinforcement Learning for Generalist Robot Policies},
  author  = {Wang, Yi and Li, Xinchen and Xie, Pengwei and Yang, Pu and Nie, Buqing and Cai, Yunuo and Zhang, Qinglin and Qu, Chendi and Wu, Jeffrey and Song, Jianheng and Ren, Xinlin and Huang, Jingshun and Pan, Mingjie and Feng, Siyuan and Chen, Zhi and Luo, Jianlan},
  journal = {arXiv preprint arXiv:2605.00416},
  year    = {2026},
  eprint  = {2605.00416},
  archivePrefix = {arXiv},
  primaryClass  = {cs.RO}
}

@inproceedings{hoque2023fleetdagger,
  title     = {{Fleet-DAgger}: Interactive Robot Fleet Learning with Scalable Human Supervision},
  author    = {Hoque, Ryan and Chen, Lawrence Yunliang and Sharma, Satvik and Dharmarajan, Karthik and Thananjeyan, Brijen and Abbeel, Pieter and Goldberg, Ken},
  booktitle = {Proceedings of The 6th Conference on Robot Learning},
  series    = {Proceedings of Machine Learning Research},
  volume    = {205},
  pages     = {368--380},
  year      = {2023},
  publisher = {PMLR}
}

@inproceedings{liu2023sirius,
  title     = {Robot Learning on the Job: Human-in-the-Loop Autonomy and Learning During Deployment},
  author    = {Liu, Huihan and Nasiriany, Soroush and Zhang, Lance and Bao, Zhiyao and Zhu, Yuke},
  booktitle = {Proceedings of Robotics: Science and Systems},
  year      = {2023},
  address   = {Daegu, Republic of Korea},
  month     = jul,
  doi       = {10.15607/RSS.2023.XIX.005}
}

@article{yu2026armada,
  title   = {{ARMADA}: Autonomous Online Failure Detection and Human Shared Control Empower Scalable Real-world Deployment and Adaptation},
  author  = {Yu, Wenye and Lv, Jun and Ying, Zixi and Jin, Yang and Wen, Chuan and Lu, Cewu},
  journal = {IEEE Robotics and Automation Letters},
  year    = {2026},
  doi     = {10.1109/LRA.2026.3709635}
}

@inproceedings{liu2025siriusfleet,
  title     = {Multi-Task Interactive Robot Fleet Learning with Visual World Models},
  author    = {Liu, Huihan and Zhang, Yu and Betala, Vaarij and Zhang, Evan and Liu, James and Ding, Crystal and Zhu, Yuke},
  booktitle = {Proceedings of The 8th Conference on Robot Learning},
  series    = {Proceedings of Machine Learning Research},
  volume    = {270},
  pages     = {4286--4313},
  year      = {2025},
  publisher = {PMLR}
}

@inproceedings{kelly2019hgdagger,
  title     = {{HG-DAgger}: Interactive Imitation Learning with Human Experts},
  author    = {Kelly, Michael and Sidrane, Chelsea and Driggs-Campbell, Katherine and Kochenderfer, Mykel J.},
  booktitle = {2019 International Conference on Robotics and Automation (ICRA)},
  pages     = {8077--8083},
  year      = {2019},
  publisher = {IEEE},
  doi       = {10.1109/ICRA.2019.8793698}
}

@article{lei2026rl100,
  title   = {Performant Robotic Manipulation with Real-World Reinforcement Learning},
  author  = {Lei, Kun and Li, Huanyu and Yu, Dongjie and Wei, Zhenyu and Guo, Lingxiao and Jiang, Zhennan and Wang, Ziyu and Liang, Shiyu and Xu, Huazhe},
  journal = {Science Robotics},
  volume  = {11},
  number  = {116},
  pages   = {eaed6267},
  year    = {2026},
  doi     = {10.1126/scirobotics.aed6267}
}

@inproceedings{luo2024serl,
  title     = {{SERL}: A Software Suite for Sample-Efficient Robotic Reinforcement Learning},
  author    = {Luo, Jianlan and Hu, Zheyuan and Xu, Charles and Tan, You Liang and Berg, Jacob and Sharma, Archit and Schaal, Stefan and Finn, Chelsea and Gupta, Abhishek and Levine, Sergey},
  booktitle = {2024 IEEE International Conference on Robotics and Automation (ICRA)},
  pages     = {16961--16969},
  year      = {2024},
  publisher = {IEEE},
  doi       = {10.1109/ICRA57147.2024.10610040}
}

@article{luo2025hilserl,
  title   = {Precise and Dexterous Robotic Manipulation via Human-in-the-Loop Reinforcement Learning},
  author  = {Luo, Jianlan and Xu, Charles and Wu, Jeffrey and Levine, Sergey},
  journal = {Science Robotics},
  volume  = {10},
  number  = {105},
  pages   = {eads5033},
  year    = {2025},
  doi     = {10.1126/scirobotics.ads5033}
}

@inproceedings{ball2023rlpd,
  title     = {Efficient Online Reinforcement Learning with Offline Data},
  author    = {Ball, Philip J. and Smith, Laura and Kostrikov, Ilya and Levine, Sergey},
  booktitle = {Proceedings of the 40th International Conference on Machine Learning},
  series    = {Proceedings of Machine Learning Research},
  volume    = {202},
  pages     = {1577--1594},
  year      = {2023},
  publisher = {PMLR}
}

@inproceedings{nakamoto2023calql,
  title     = {{Cal-QL}: Calibrated Offline {RL} Pre-Training for Efficient Online Fine-Tuning},
  author    = {Nakamoto, Mitsuhiko and Zhai, Yuexiang and Singh, Anikait and Mark, Max Sobol and Ma, Yi and Finn, Chelsea and Kumar, Aviral and Levine, Sergey},
  booktitle = {Advances in Neural Information Processing Systems},
  volume    = {36},
  pages     = {62244--62269},
  year      = {2023}
}

@inproceedings{wagenmaker2025dsrl,
  title     = {Steering Your Diffusion Policy with Latent Space Reinforcement Learning},
  author    = {Wagenmaker, Andrew and Zhang, Yunchu and Nakamoto, Mitsuhiko and Park, Seohong and Yagoub, Waleed and Nagabandi, Anusha and Gupta, Abhishek and Levine, Sergey},
  booktitle = {Proceedings of The 9th Conference on Robot Learning},
  series    = {Proceedings of Machine Learning Research},
  volume    = {305},
  pages     = {258--282},
  year      = {2025},
  publisher = {PMLR}
}

@article{li2025grrl,
  title   = {{GR-RL}: Going Dexterous and Precise for Long-Horizon Robotic Manipulation},
  author  = {Li, Yunfei and Ma, Xiao and Xu, Jiafeng and Cui, Yu and Cui, Zhongren and Han, Zhigang and Huang, Liqun and Kong, Tao and Liu, Yuxiao and Niu, Hao and Peng, Wanli and Qiao, Jingchao and Ren, Zeyu and Shi, Haixin and Su, Zhi and Tian, Jiawen and Xiao, Yuyang and Zhang, Shenyu and Zheng, Liwei and Li, Hang and Wu, Yonghui},
  journal = {arXiv preprint arXiv:2512.01801},
  year    = {2025},
  eprint  = {2512.01801},
  archivePrefix = {arXiv},
  primaryClass  = {cs.RO}
}

@inproceedings{zhou2025efficient,
  title     = {Efficient Online Reinforcement Learning Fine-Tuning Need Not Retain Offline Data},
  author    = {Zhou, Zhiyuan and Peng, Andy and Li, Qiyang and Levine, Sergey and Kumar, Aviral},
  booktitle = {International Conference on Learning Representations},
  volume    = {2025},
  pages     = {32343--32368},
  year      = {2025}
}

@inproceedings{tan2026robodopamine,
  title={Robo-Dopamine: General Process Reward Modeling for High-Precision Robotic Manipulation},
  author={Tan, Huajie and Chen, Sixiang and Xu, Yijie and Wang, Zixiao and Ji, Yuheng and Chi, Cheng and Lyu, Yaoxu and Zhao, Zhongxia and Chen, Xiansheng and Co, Peterson and Xie, Shaoxuan and Yao, Guocai and Wang, Pengwei and Wang, Zhongyuan and Zhang, Shanghang},
  booktitle={Proceedings of the IEEE/CVF Conference on Computer Vision and Pattern Recognition},
  year={2026}
}

@inproceedings{liang2026robometer,
  title={Robometer: Scaling General-Purpose Robotic Reward Models via Trajectory Comparisons},
  author={Liang, Anthony and Korkmaz, Yigit and Zhang, Jiahui and Hwang, Minyoung and Anwar, Abrar and Kaushik, Sidhant and Shah, Aditya and Huang, Alex S. and Zettlemoyer, Luke and Fox, Dieter and Xiang, Yu and Li, Anqi and Bobu, Andreea and Gupta, Abhishek and Tu, Stephen and Biyik, Erdem and Zhang, Jesse},
  booktitle={Robotics: Science and Systems},
  year={2026}
}

@inproceedings{zhang2025rewind,
  title={ReWiND: Language-Guided Rewards Teach Robot Policies without New Demonstrations},
  author={Zhang, Jiahui and Luo, Yusen and Anwar, Abrar and Sontakke, Sumedh Anand and Lim, Joseph J and Thomason, Jesse and Biyik, Erdem and Zhang, Jesse},
  booktitle={Proceedings of the 9th Conference on Robot Learning},
  volume={305},
  pages={460--488},
  year={2025}
}

@article{frans2025diffusion,
  title={Diffusion Guidance Is a Controllable Policy Improvement Operator},
  author={Frans, Kevin and Park, Seohong and Abbeel, Pieter and Levine, Sergey},
  journal={arXiv preprint arXiv:2505.23458},
  year={2025}
}

@article{deng2026e2hil,
  title={E2HiL: Entropy-Guided Sample Selection for Efficient Real-World Human-in-the-Loop Reinforcement Learning},
  author={Deng, Haoyuan and Lin, Yudong and Xue, Yuanjiang and Du, Haoyang and Wang, Qianzhun and Zhou, Boyang and Wu, Zhenyu and Wang, Ziwei},
  journal={IEEE Robotics and Automation Letters},
  volume={11},
  number={7},
  pages={8084--8091},
  year={2026},
  doi={10.1109/LRA.2026.3693577}
}

@inproceedings{luo2025humanagent,
  title={Human-Agent Joint Learning for Efficient Robot Manipulation Skill Acquisition},
  author={Luo, Shengcheng and Peng, Quanquan and Lv, Jun and Hong, Kaiwen and Driggs-Campbell, Katherine Rose and Lu, Cewu and Li, Yong-Lu},
  booktitle={2025 IEEE International Conference on Robotics and Automation (ICRA)},
  pages={1370--1377},
  year={2025},
  publisher={IEEE},
  doi={10.1109/ICRA55743.2025.11127637}
}

@article{peng2019advantage,
  title={Advantage-Weighted Regression: Simple and Scalable Off-Policy Reinforcement Learning},
  author={Peng, Xue Bin and Kumar, Aviral and Zhang, Grace and Levine, Sergey},
  journal={arXiv preprint arXiv:1910.00177},
  year={2019}
}

@article{nair2020awac,
  title={{AWAC}: Accelerating Online Reinforcement Learning with Offline Datasets},
  author={Nair, Ashvin and Gupta, Abhishek and Dalal, Murtaza and Levine, Sergey},
  journal={arXiv preprint arXiv:2006.09359},
  year={2020}
}

@inproceedings{ren2025diffusion,
  title={Diffusion Policy Policy Optimization},
  author={Ren, Allen Z. and Lidard, Justin and Ankile, Lars Lien and Simeonov, Anthony and Agrawal, Pulkit and Majumdar, Anirudha and Burchfiel, Benjamin and Dai, Hongkai and Simchowitz, Max},
  booktitle={International Conference on Learning Representations},
  year={2025}
}

@article{ho2022classifierfree,
  title={Classifier-Free Diffusion Guidance},
  author={Ho, Jonathan and Salimans, Tim},
  journal={arXiv preprint arXiv:2207.12598},
  year={2022}
}

@inproceedings{bellemare2017distributional,
  title={A Distributional Perspective on Reinforcement Learning},
  author={Bellemare, Marc G. and Dabney, Will and Munos, R{\'e}mi},
  booktitle={Proceedings of the 34th International Conference on Machine Learning},
  volume={70},
  pages={449--458},
  year={2017},
  publisher={PMLR}
}

@article{black2025pi05,
  title={${\pi}_{0.5}$: A Vision-Language-Action Model with Open-World Generalization},
  author={Black, Kevin and Brown, Noah and Darpinian, James and Dhabalia, Karan and Driess, Danny and Esmail, Adnan and Equi, Michael and Finn, Chelsea and Fusai, Niccolo and Galliker, Manuel Y. and Ghosh, Dibya and Groom, Lachy and Hausman, Karol and Ichter, Brian and Jakubczak, Szymon and Jones, Tim and Ke, Liyiming and LeBlanc, Devin and Levine, Sergey and Li-Bell, Adrian and Mothukuri, Mohith and Nair, Suraj and Pertsch, Karl and Ren, Allen Z. and Shi, Lucy Xiaoyang and Smith, Laura and Springenberg, Jost Tobias and Stachowicz, Kyle and Tanner, James and Vuong, Quan and Walke, Homer and Walling, Anna and Wang, Haohuan and Yu, Lili and Zhilinsky, Ury},
  journal={arXiv preprint arXiv:2504.16054},
  year={2025}
}

@article{torne2026mem,
  title={{MEM}: Multi-Scale Embodied Memory for Vision Language Action Models},
  author={Torne, Marcel and Pertsch, Karl and Walke, Homer and Vedder, Kyle and Nair, Suraj and Ichter, Brian and Ren, Allen Z. and Wang, Haohuan and Tang, Jiaming and Stachowicz, Kyle and Dhabalia, Karan and Equi, Michael and Vuong, Quan and Springenberg, Jost Tobias and Levine, Sergey and Finn, Chelsea and Driess, Danny},
  journal={arXiv preprint arXiv:2603.03596},
  year={2026}
}
\bibliographystyle{iclr2027_conference}

\clearpage
\ifdefined\maintextonly
  \hypersetup{bookmarksdepth=-2}
  \makeatletter
  \let\mainonlyoriginaldestination\new@pdflink
  \def\new@pdflink#1{%
    \immediate\write\@auxout{\string\mainonlyappendixdestination{#1}}%
    \mainonlyoriginaldestination{#1}}
  \long\def\protected@write#1#2#3{%
    \begingroup
      #2\let\protect\@unexpandable@protect
      \edef\reserved@a{\immediate\write#1{#3}}\reserved@a
    \endgroup
    \if@nobreak\ifvmode\nobreak\fi\fi}
  \makeatother
  \AddToHook{shipout/before}{\DiscardShipoutBox}
\fi
\appendix
\raggedbottom
\section{Experimental Setup}
\label{app:experiment_setup}

Figure~\ref{fig:real_robot_setup} summarizes the real-robot platforms and task workspaces, while Table~\ref{tab:task_data_budgets} reports the episode counts used for offline demonstrations, offline RL warmup, and each Online RL collection round.

\subsection{Task Procedures}
\label{app:task_procedures}

\paragraph{\textit{Box Assembly}.}
A cardboard box is placed in a random pose on the table.
The robot grasps one of its front flaps to bring the box upright.
The other arm then slowly folds a side flap inward and clamps it in place.
Finally, the robot lays the box flat and repeats the inward fold-and-clamp operation on the opposite side.
We evaluate the task on four tablecloths that differ in both texture and appearance.
The task is successful if, within the 120~s time limit, both side flaps are fully locked in place and the box lies horizontally on the table.

\paragraph{\textit{Laundry Folding}.}
A crumpled garment is randomly arranged at a corner of the table.
The robot first moves it into its workspace.
It repeatedly locates the garment seams and progressively spreads the garment flat.
If necessary, the robot rotates the garment to a horizontal orientation.
It folds both sleeves inward, rotates the garment back to a vertical orientation, and folds it in half.
It then pushes the folded garment to the table corner.
\review{The task is successful if the robot completes the procedure within 240~s and pushes the garment to the table corner while keeping it folded.}

\paragraph{\textit{Magnetic Bus Assembly}.}
The robot starts with six magnetic blocks scattered across the workspace and first identifies the three base blocks of the bus.
It sequentially aligns and magnetically joins their narrow side edges to form a horizontal base.
The robot then identifies the three body blocks labeled ``V'', ``A'', and ``M'' and attaches them magnetically to the base in that order, placing each block flush against its neighboring body block.
The task is successful if, within 300~s, all six blocks form the bus, with a connected horizontal base and tightly adjoining body blocks spelling ``VAM''.

\begin{figure}[H]
    \centering
    \includegraphics[width=0.60\textwidth]{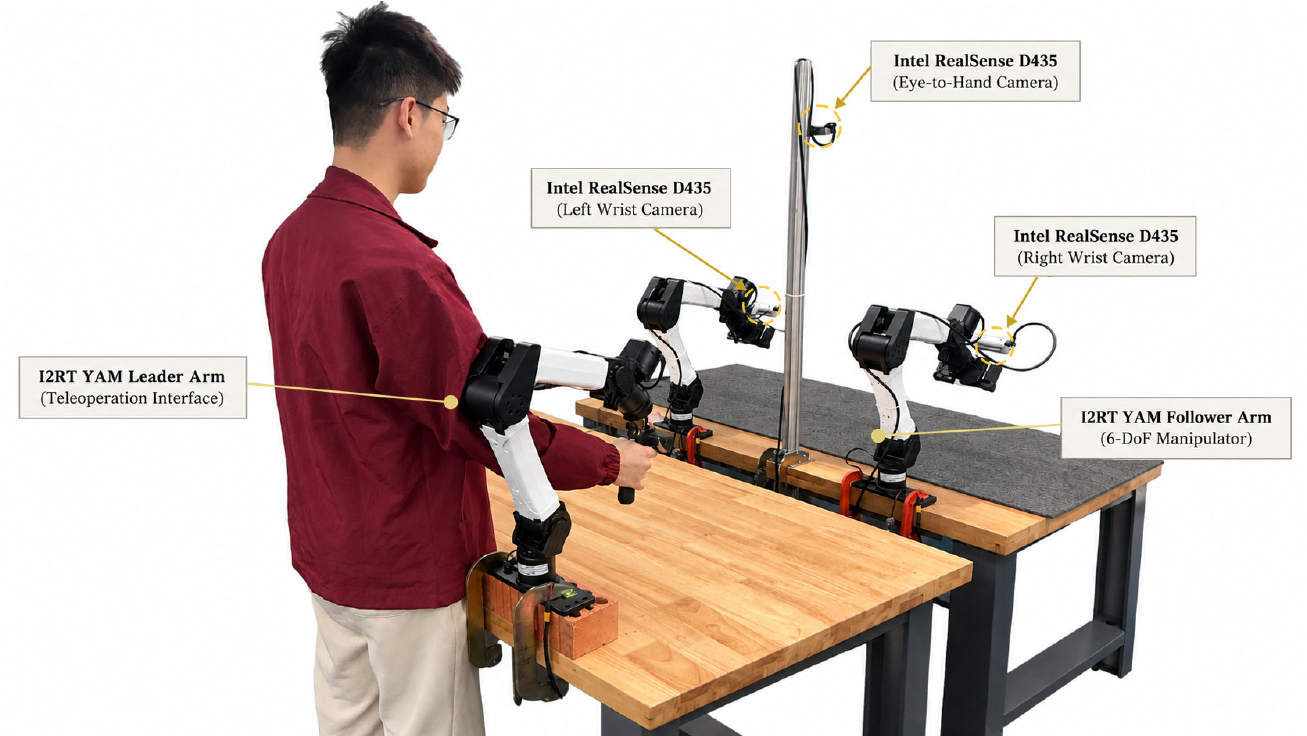}
    \caption{\textbf{Real-robot experimental setup.}
    The illustration summarizes the robot hardware, sensing configuration, human-intervention interface, and task workspaces used in our experiments.}
    \label{fig:real_robot_setup}
\end{figure}

\begin{table}[H]
    \centering
    \caption{\review{\textbf{Task-specific data-collection budgets.}}}
    \label{tab:task_data_budgets}
    \small
    \setlength{\tabcolsep}{5.5pt}
    \begin{tabular}{@{}lccc@{}}
        \toprule
        \textbf{Task} & \review{\shortstack{\textbf{Offline demonstrations}\\\textbf{(episodes)}}} & \review{\shortstack{\textbf{Offline RL warmup}\\\textbf{(episodes)}}} & \review{\shortstack{\textbf{Online RL rollouts per round}\\\textbf{(episodes)}}} \\
        \midrule
        \textit{Box Assembly} & 200 & 100 & 20 \\
        \textit{Laundry Folding} & 90 & 50 & 15 \\
        \textit{Magnetic Bus Assembly} & 30 & 40 & 10 \\
        \bottomrule
    \end{tabular}
\end{table}

\paragraph{Matched training budgets.}
\review{For each task, the \textit{Offline RL} and \textit{Online RL} comparisons use the same total gradient-step budget and the same data-collection budget within each stage.}
\review{\textit{Offline RL} uses batched updates, whereas \textit{Online RL} distributes its updates across repeated collection-and-training iterations.}

\begingroup
\reviewcolor
\Needspace{9\baselineskip}
\section{Additional Quantitative Evidence}

\subsection{Value-Expert Replacement at Inference}
\label{app:value_replacement}

Table~\ref{tab:online_rl_final_three} reports value-expert replacement across all three tasks with the action experts fixed at the final three LfE checkpoints.
The demonstration-initialized value expert is trained on $\mathcal{D}_{\mathrm{demo}}$ using the progress objective, and the post-warmup value expert is the snapshot after offline RL warmup.
Both conditions preserve the value caching mechanism and use the same evaluation episode counts as the main Online DAgger and LfE experiments on \textit{Box Assembly} and \textit{Laundry Folding}.
The original LfE value reference reuses the main-experiment results.

\subsection{Sensitivity to Checkpoint Aggregation}
\label{app:checkpoint_sensitivity}

We additionally average success rates across every available LfE checkpoint in Fig.~\ref{fig:online_rl_performance} to assess dependence on the final-three-checkpoint summary.
Table~\ref{tab:lfe_checkpoint_sensitivity} reports an arithmetic mean over checkpoints within each task and method.
VaM exceeds the caching ablation on each task, with a mean gap of $31.3\%$, compared with $29.4\%$ over the final three checkpoints.
The consistent gains across aggregation windows support the robustness of the conclusions drawn from the final three checkpoints.

\begin{table}[H]
    \reviewcolor
    \centering
    \caption{\textbf{Success rate over all available LfE checkpoints.}
    $N$ is the number of checkpoints per method.}
    \label{tab:lfe_checkpoint_sensitivity}
    \small
    \begin{tabular}{@{}lcccc@{}}
        \toprule
        \textbf{Task} & $N$ & \textbf{VaM} $(\%)$ & \textbf{VaM (w/o value caching)} $(\%)$ & \textbf{Gap} $(\%)$ \\
        \midrule
        \reviewtablerows{exp_fig/analysis/review-evidence/outputs/checkpoint-rows.tex}
        \bottomrule
    \end{tabular}
\end{table}

\subsection{Recovery-Event Counts}
\label{app:recovery_counts}

Human annotators review all LfE rollouts from the four methods across all three tasks without knowing which method produced each rollout.
Table~\ref{tab:recovery_event_counts} reports the number of successful recoveries, the number of events requiring recovery, and the number of episodes reviewed for each method.
A successful recovery requires the policy to execute corrective actions and complete the affected task stage.
Multiple events may occur within one episode, so event counts and episode counts represent different units.
The reported rates are event-level fractions.

\begin{table}[H]
    \reviewcolor
    \centering
    \caption{\textbf{Raw counts for the blind recovery evaluation.}}
    \label{tab:recovery_event_counts}
    \small
    \setlength{\tabcolsep}{3pt}
    \begin{tabular}{@{}ccccc@{}}
        \toprule
        \textbf{Task} & \textbf{Method} & \shortstack{\textbf{\# recovered /}\\\textbf{\# required}} & \textbf{\# episodes} & \shortstack{\textbf{Recovery rate}\\$(\%)$} \\
        \midrule
        \reviewtablerows{exp_fig/analysis/review-evidence/outputs/recovery-rows.tex}
        \bottomrule
    \end{tabular}
\end{table}

\subsection{Total Experimental Time}
\label{app:time_accounting}

Table~\ref{tab:recorded_time_components} reports experimental time per gradient step and total experimental time.
\textit{Offline RL} first collects data, then trains the value expert and subsequently the action expert.
\textit{Online RL} runs data collection, value training, and action training concurrently.

\begin{table}[H]
    \reviewcolor
    \centering
    \caption{\textbf{Experimental time per gradient step and total experimental time.}
    Time per gradient step is reported as mean $\pm$ sample standard deviation.}
    \label{tab:recorded_time_components}
    \small
    \begin{tabular}{@{}llcc@{}}
        \toprule
        \textbf{Task} & \textbf{Setting} & \shortstack{\textbf{Time per gradient step}\\\textbf{(s)}} & \shortstack{\textbf{Total experimental time}\\\textbf{(hours)}} \\
        \midrule
        \reviewtablerows{exp_fig/analysis/review-evidence/outputs/timing-rows.tex}
        \bottomrule
    \end{tabular}
\end{table}

\Needspace{8\baselineskip}
\subsection{Hyperparameters}
\label{app:hyperparameters}

Table~\ref{tab:hyperparameters} summarizes the hyperparameters for value and action learning, including task-specific settings.
The observation-history stride specifies the interval in environment steps between sampled observations.
Update counts are measured in gradient steps.

\begin{table}[H]
    \reviewcolor
    \centering
    \caption{\textbf{Hyperparameters for value and action learning.}}
    \label{tab:hyperparameters}
    \small
    \setlength{\tabcolsep}{3pt}
    \renewcommand{\arraystretch}{1.08}
    \begin{tabular*}{\linewidth}{@{\extracolsep{\fill}}llll@{}}
        \toprule
        \textbf{Hyperparameter} & \textbf{Scope} & \textbf{Task} & \textbf{Value} \\
        \midrule
        Value-support atoms $K$ & Value expert & All tasks & 256 \\
        Action horizon $H$ (steps) & Action expert & All tasks & 100 \\
        Control frequency (Hz) & Deployment & All tasks & 100 \\
        Progress-label std. dev. $\sigma_{\mathrm{init}}$ & Value expert & All tasks & $3/512$ \\
        Target-network EMA coefficient $\beta$ & Value expert & All tasks & 1.0 \\
        Value-training stride $d$ (steps) & Value training & All tasks & 30 \\
        Advantage quantile $\rho$ & Action training & All tasks & 0.70 \\
        Observation-history horizon $H_\pi$ & Action expert & All tasks & 4 \\
        Observation-history horizon & Value expert & All tasks & 2 \\
        Observation-history stride (steps) & Value / action & All tasks & 30 \\
        Optimizer & Value / action & All tasks & AdamW \\
        Learning rate & Value / action & All tasks & $10^{-5}$ \\
        Learning-rate schedule & Value / action & All tasks & Constant \\
        Batch size (samples) & Value / action & All tasks & 64 \\
        Initialization steps & Value / action & All tasks & $40{,}000$ \\
        Offline RL warmup steps & Value / action & All tasks & $10{,}000$ \\
        \addlinespace[4pt]
        \multirow{3}{*}{Discount factor $\gamma$} & \multirow{3}{*}{Value expert} & \textit{Box Assembly} & 0.999 \\
        & & \textit{Laundry Folding} & 0.9997 \\
        & & \textit{Magnetic Bus Assembly} & 0.9997 \\
        \addlinespace[3pt]
        \multirow{3}{*}{Guidance weight $w$} & \multirow{3}{*}{Action inference} & \textit{Box Assembly} & 1.25 \\
        & & \textit{Laundry Folding} & 1.0 \\
        & & \textit{Magnetic Bus Assembly} & 1.0 \\
        \addlinespace[3pt]
        \multirow{3}{*}{Updates per round} & \multirow{3}{*}{Value / action} & \textit{Box Assembly} & 600 \\
        & & \textit{Laundry Folding} & $1{,}000$ \\
        & & \textit{Magnetic Bus Assembly} & $1{,}000$ \\
        \bottomrule
    \end{tabular*}
\end{table}
\endgroup

\raggedbottom
\Needspace{0.95\textheight}
\section{Additional Qualitative Comparisons of Value Caching}
\label{app:value_caching_qualitative}

Figure~\ref{fig:additional_value_caching} compares autonomous LfE rollouts on \textit{Laundry Folding} and \textit{Magnetic Bus Assembly}.
Across both tasks, VaM exhibits more progress-ordered PCA trajectories and recovery from stuck states, whereas its ablation shows failed execution with low or misleading value predictions.

\begin{minipage}{\textwidth}
    \setlength{\abovecaptionskip}{6pt}
    \setlength{\belowcaptionskip}{2pt}
    \centering
    \includegraphics[width=0.39\textwidth]{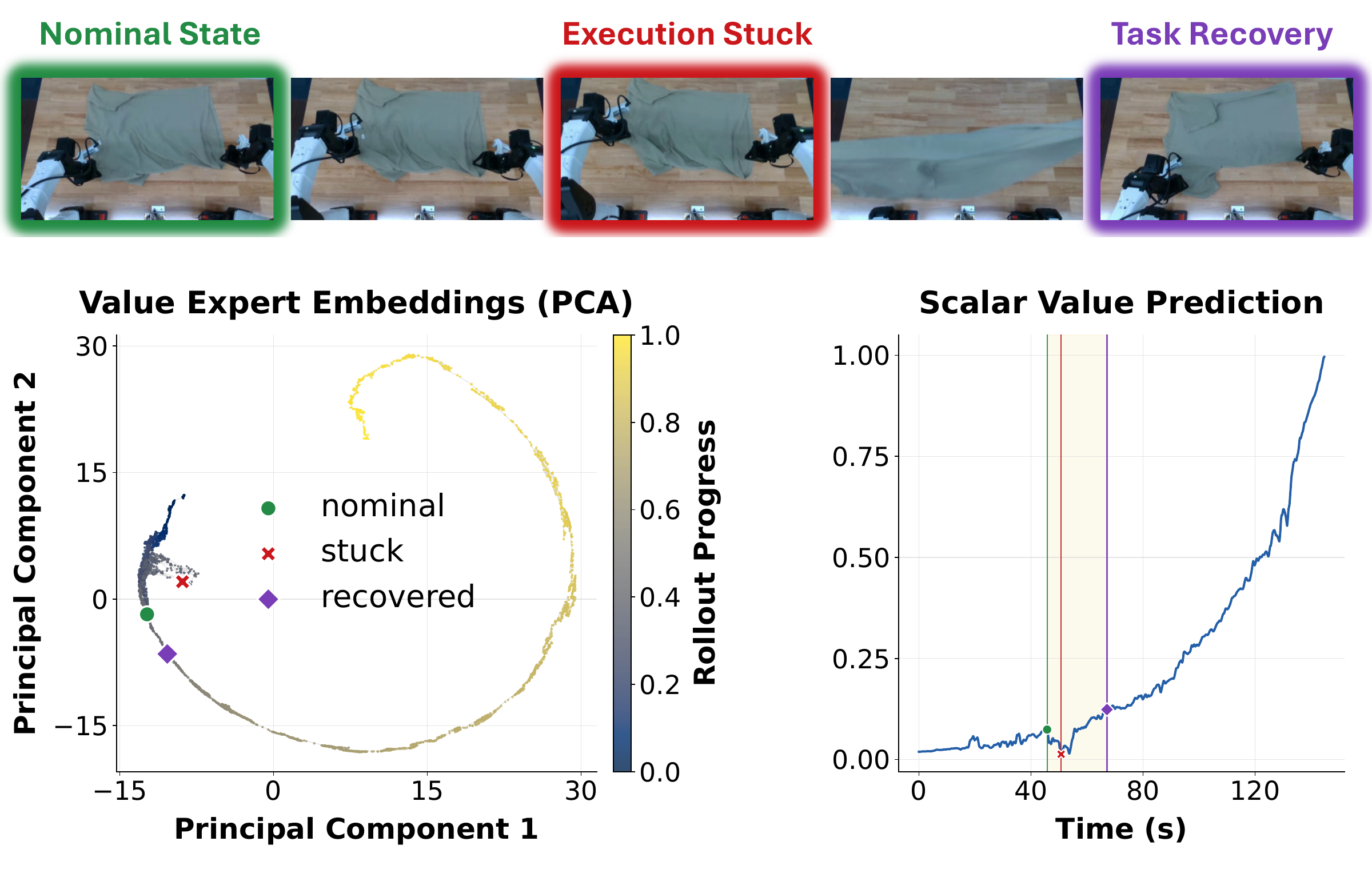}%
    \hspace{0.01\textwidth}
    \includegraphics[width=0.39\textwidth]{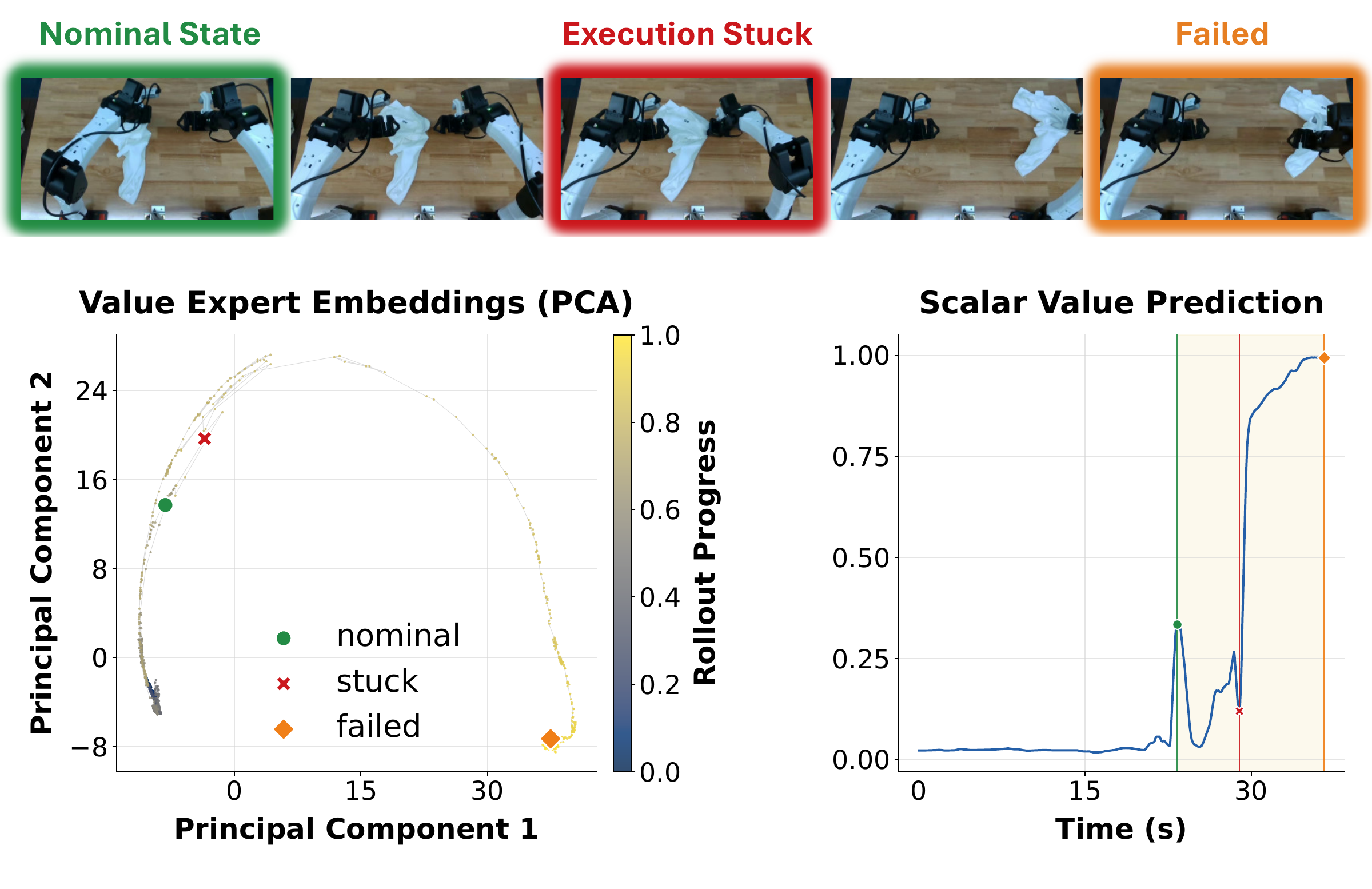}
    \par\vspace{2pt}
    \includegraphics[width=0.39\textwidth]{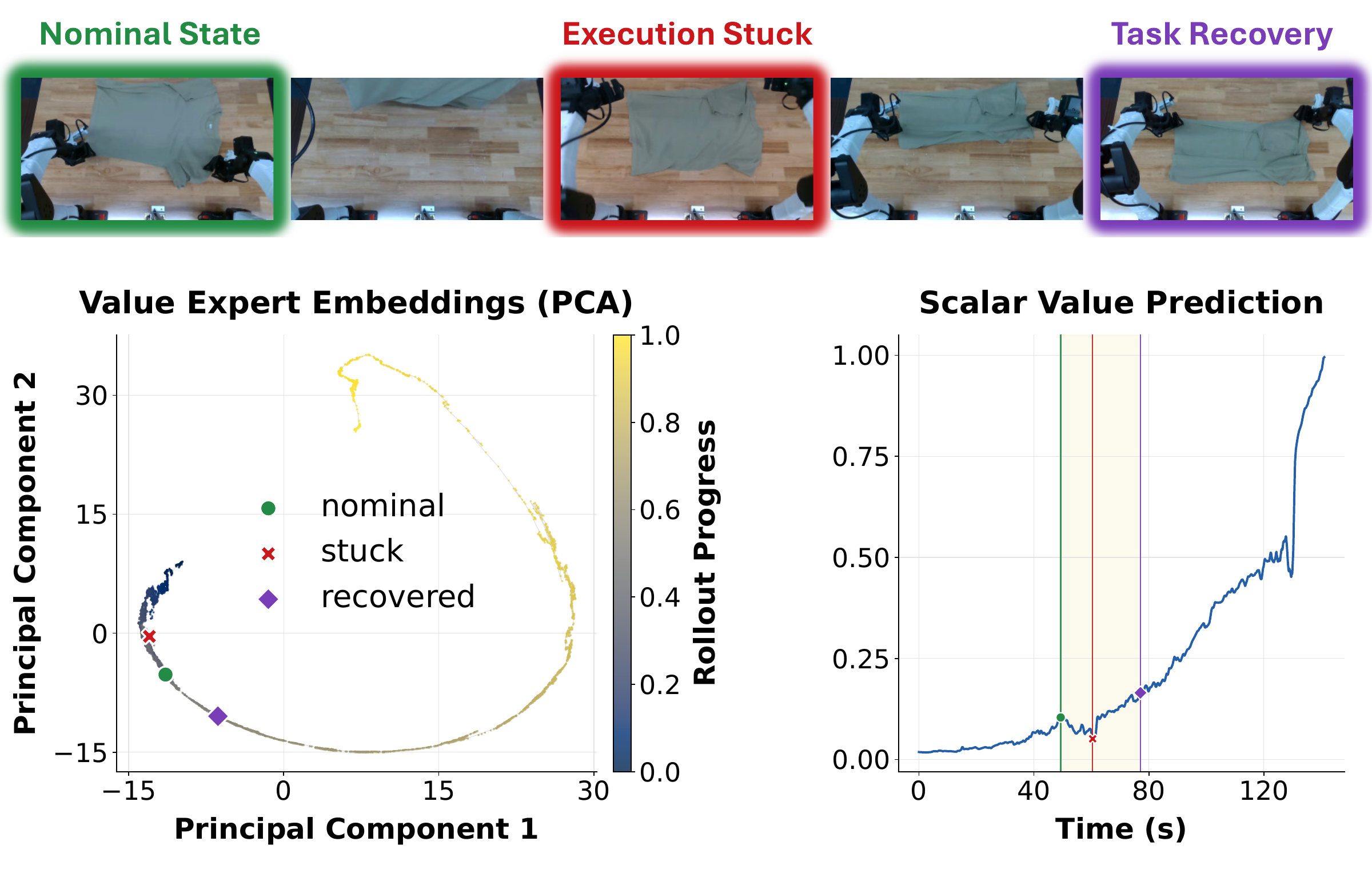}%
    \hspace{0.01\textwidth}
    \includegraphics[width=0.39\textwidth]{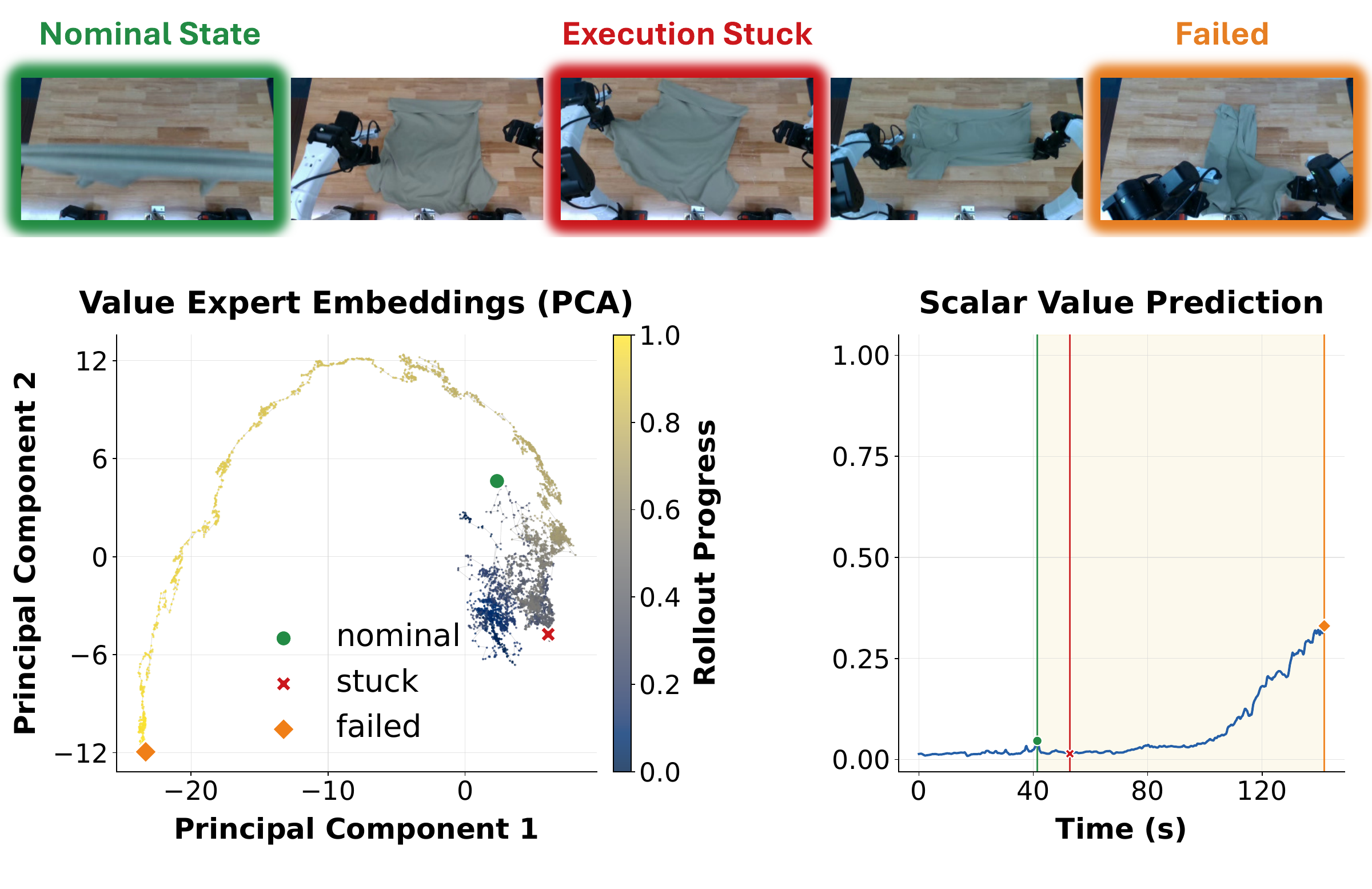}
    \par\vspace{2pt}
    \includegraphics[width=0.39\textwidth]{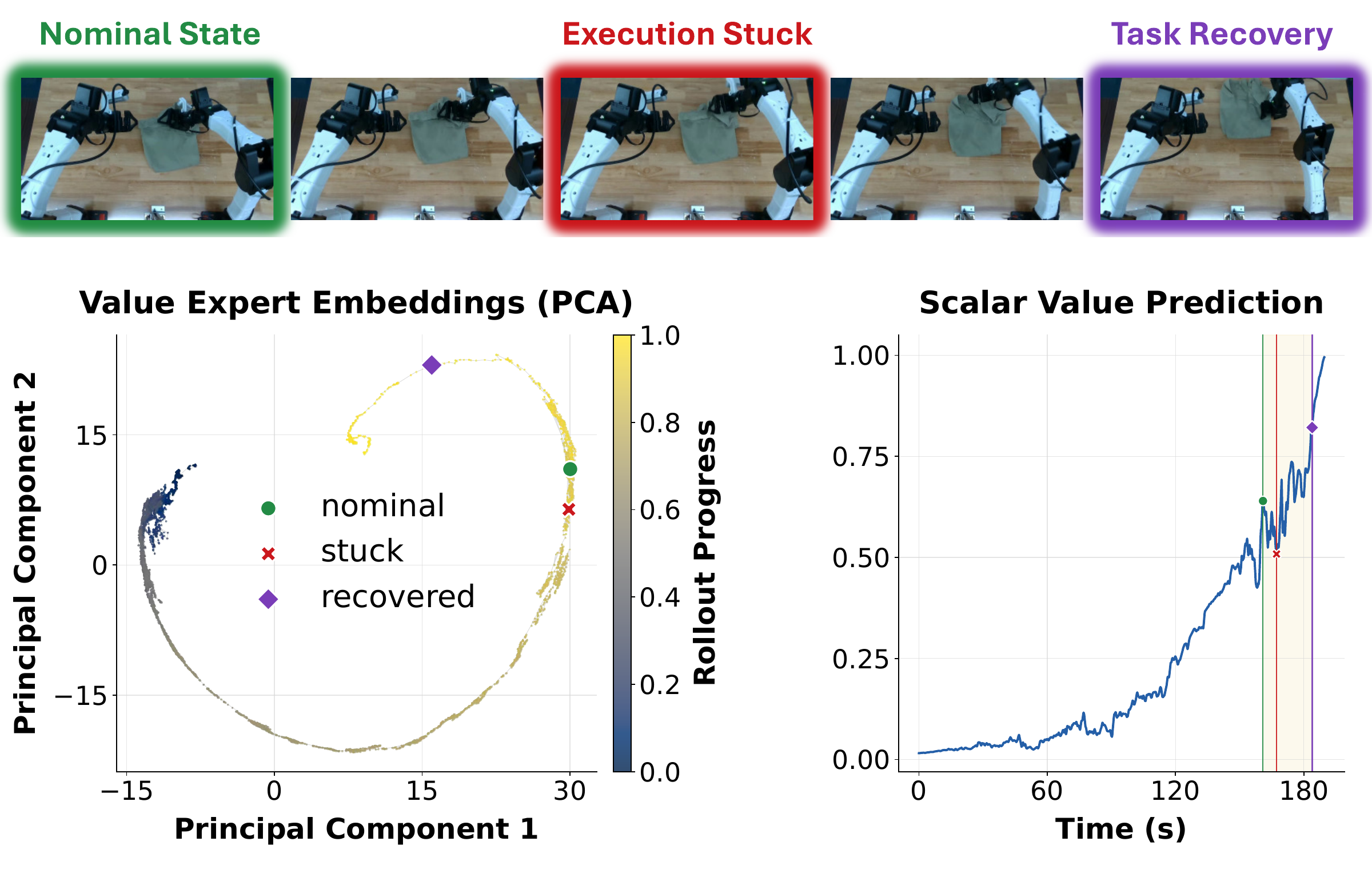}%
    \hspace{0.01\textwidth}
    \includegraphics[width=0.39\textwidth]{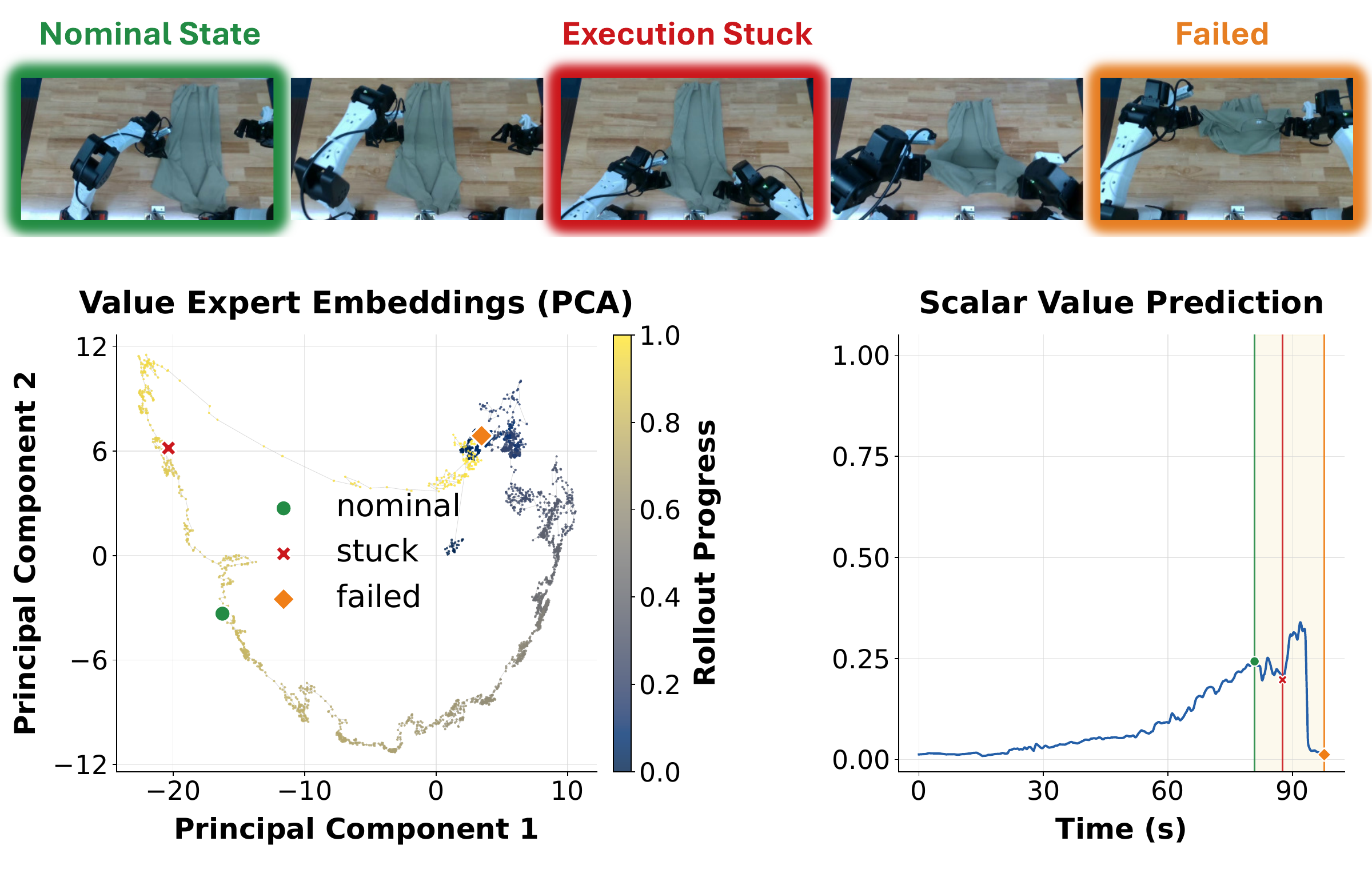}
    \par\vspace{2pt}
    \includegraphics[width=0.39\textwidth]{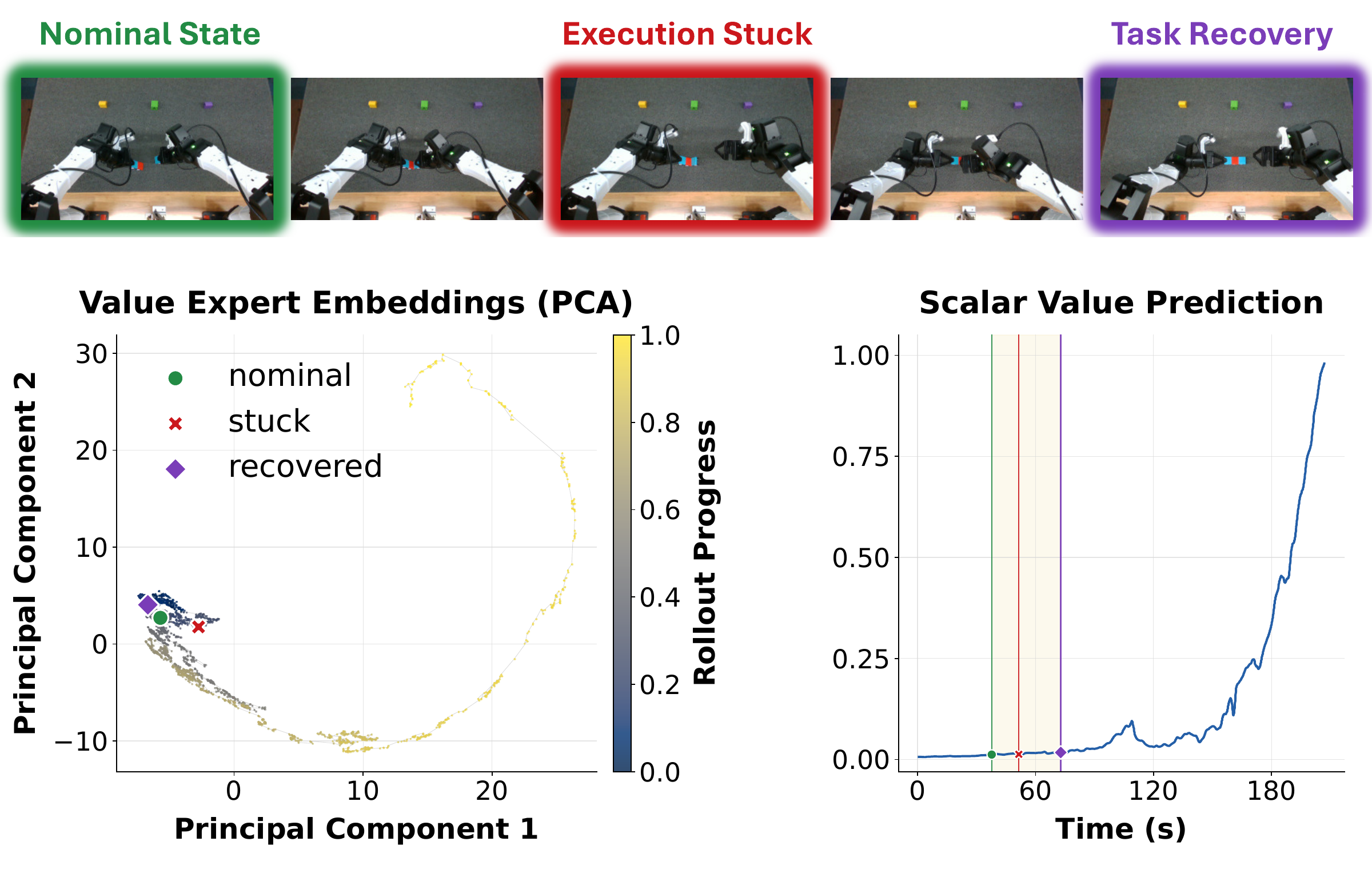}%
    \hspace{0.01\textwidth}
    \includegraphics[width=0.39\textwidth]{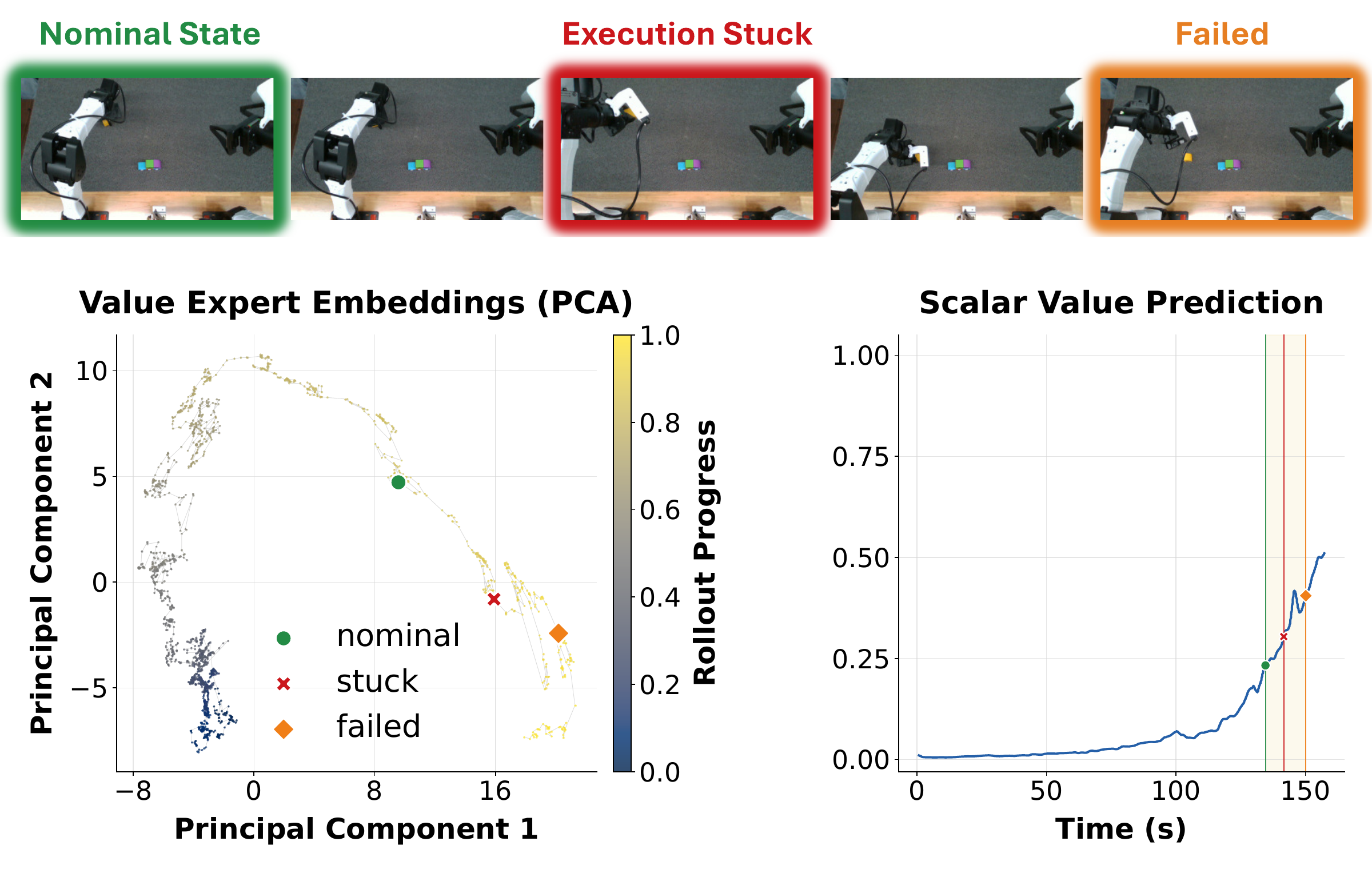}
    \par\vspace{2pt}
    \includegraphics[width=0.39\textwidth]{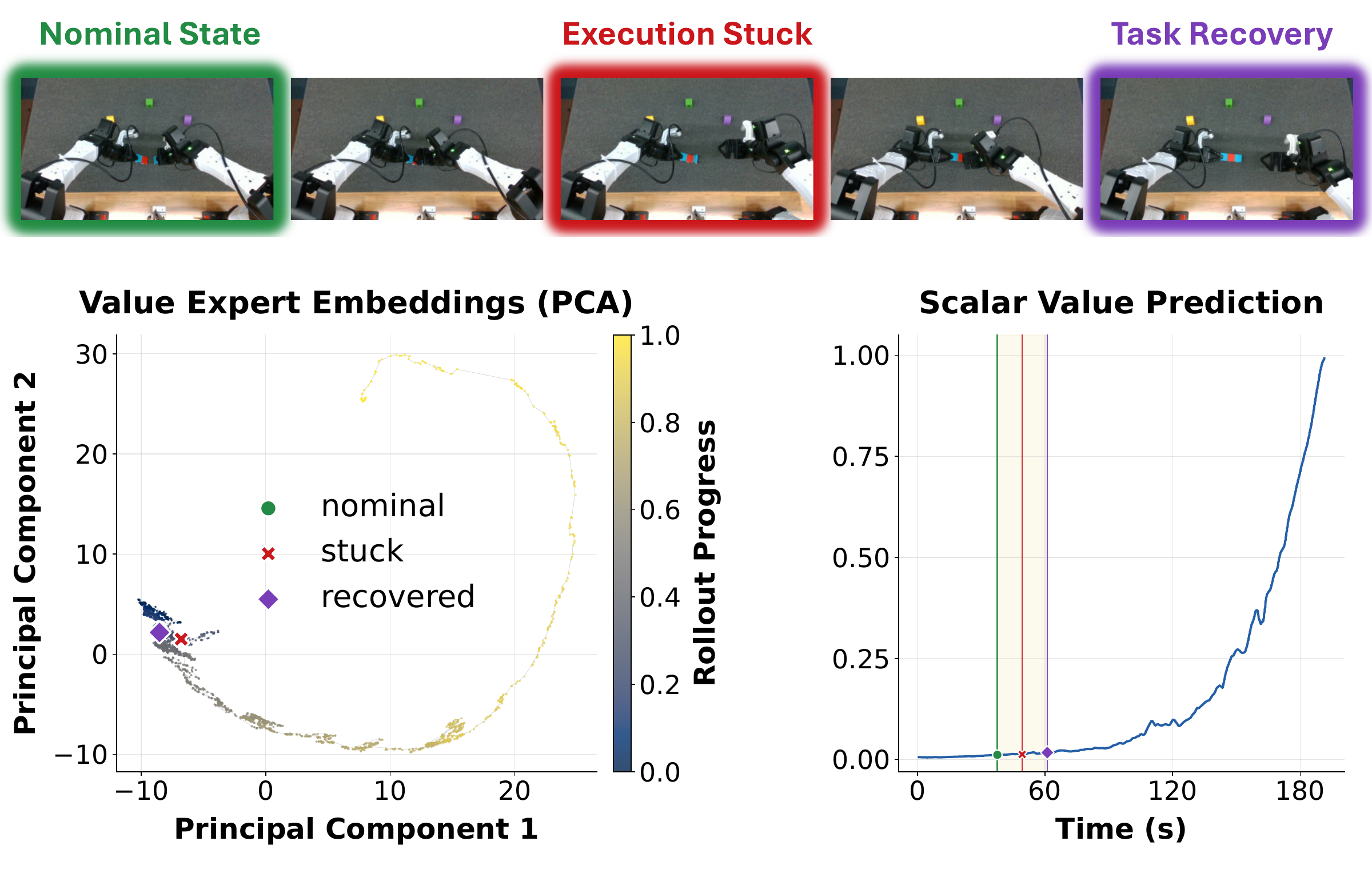}%
    \hspace{0.01\textwidth}
    \includegraphics[width=0.39\textwidth]{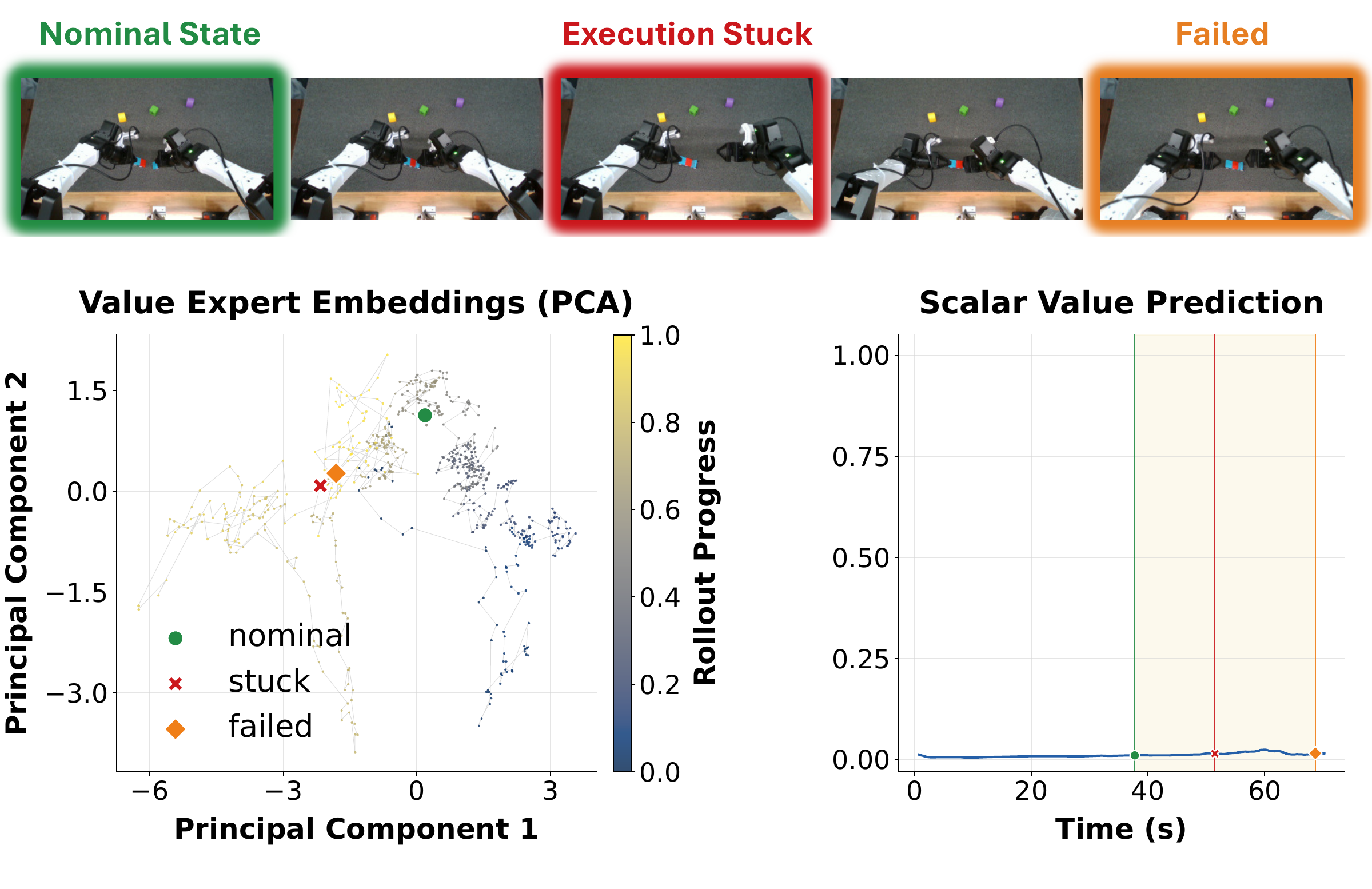}
    \nopagebreak
    \captionof{figure}{\textbf{Qualitative comparison of VaM (left) and VaM (w/o value caching) (right) on \textit{Laundry Folding} (first three rows) and \textit{Magnetic Bus Assembly} (last two rows).}
    Each example shows eye-to-hand camera images, PCA embeddings of the value expert colored by rollout progress, and scalar value predictions.
    The dark-blue trace shows the 15-frame mean of scalar value predictions.}
    \label{fig:additional_value_caching}
\par
\end{minipage}

\clearpage
\review{We further examine cached values under partial observability using two \textit{Laundry Folding} cases, following the replacement protocol in Sec.~\ref{subsec:value_caching_performance}.}
\review{Each case fixes the action expert and all observation inputs, including the four image frames at $-3$, $-2$, $-1$, and $0$ seconds, and replaces the original LfE values with demo-trained or post-warmup estimates while preserving value caching.}
\review{Figures~\ref{fig:laundry_lost_contact} and~\ref{fig:laundry_regrasped} show predicted end-effector (EEF) trajectories and gripper commands.}

\begin{minipage}{\textwidth}
    \setlength{\abovecaptionskip}{3pt}
    \setlength{\belowcaptionskip}{2pt}
    \centering
    \includegraphics[width=0.84\textwidth]{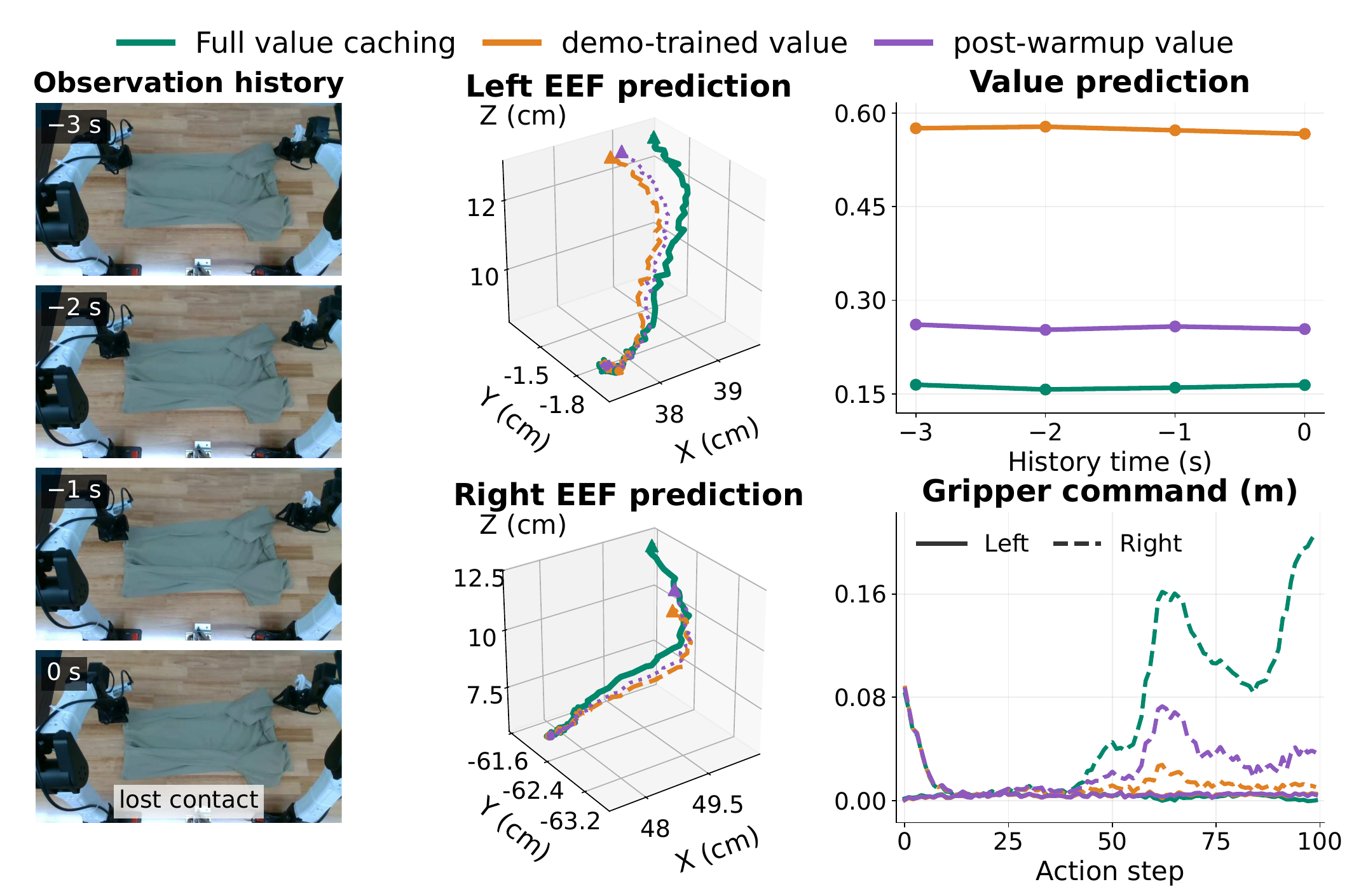}
    \captionof{figure}{\review{\textbf{Policy predictions after losing contact in \textit{Laundry Folding}.}
    Under identical visual observations, the policy using the original cached values predicts opening the right gripper wider and adjusting its pose to prepare for another attempt to grasp the garment corner, compared with using demo-trained or post-warmup values as the cache.}}
    \label{fig:laundry_lost_contact}
    \par\vspace{8pt}
    \includegraphics[width=0.84\textwidth]{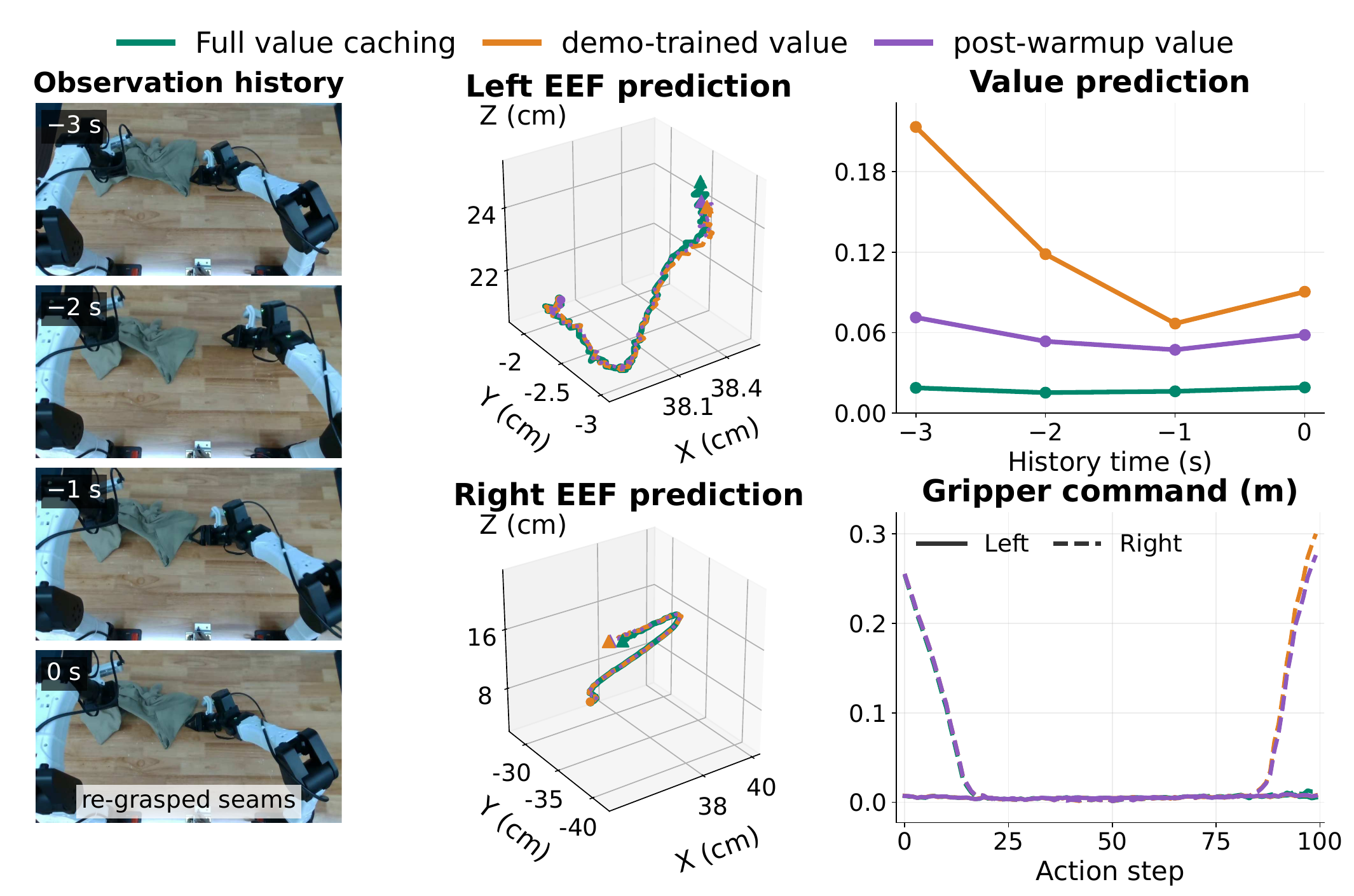}
    \captionof{figure}{\review{\textbf{Policy predictions after re-grasping the seams in \textit{Laundry Folding}.}
    The original cached values maintain the predicted right-gripper closure, while demo-trained and post-warmup values produce opening commands near the end of the action chunk, potentially causing loss of contact and leaving the garment disordered again.}}
    \label{fig:laundry_regrasped}
\par
\end{minipage}

\clearpage
\flushbottom
\section{Training Pipeline}
\label{sec:appendix}

\par\addvspace{\intextsep}
\begingroup
\hrule
\nobreak\vskip4pt
\makeatletter\fs@ruled\makeatother
\captionof{algorithm}{Continual Value-Guided Offline-to-Online RL Pipeline}
\makeatletter\unvbox\@floatcapt\makeatother
\nobreak\hrule\nobreak\vskip2pt
\phantomsection
\label{alg:training_pipeline}
\begin{algorithmic}[1]
\Require Offline expert demonstrations $\mathcal{D}_{\mathrm{demo}}$, value-data stride $d\geq1$, action-chunk/TD horizon $H$, Online DAgger rounds $N_{\mathrm{D}}$, Learn from Experience (LfE) rounds $N_{\mathrm{E}}$
\Statex \hspace{\algorithmicindent}\texttt{// Stage 1: Offline expert demonstration initialization}
\State $V_{\psi^{(0)}} \gets \Call{Init}{\mathcal{D}_{\mathrm{demo}}}$
\State $\pi_{\theta^{(0)}} \gets \Call{BC}{\mathcal{D}_{\mathrm{demo}}}$
\Statex \hspace{\algorithmicindent}\texttt{// Stage 2: Offline RL warmup}
\State $\mathcal{D}_{\mathrm{warm}}^{\mathrm{on}} \gets \Call{DeployWithIntervention}{\pi_{\theta^{(0)}},V_{\psi^{(0)}}}$
\State $\mathcal{D}_{\mathrm{warm}}^{\mathrm{pre}} \gets \Call{InterventionPrefixes}{\mathcal{D}_{\mathrm{warm}}^{\mathrm{on}}}$
\State $\widetilde{\mathcal{D}}_{\mathrm{warm}}^{V} \gets \Call{Downsample}_d(\mathcal{D}_{\mathrm{warm}}^{\mathrm{pre}})$ \Comment{retain original starts $t=0,d,2d,\ldots$ and keep each $H$-step tuple}
\State $B_{\mathrm{warm}}^{V} \gets \operatorname{Mix}_{1:1}(\mathcal{D}_{\mathrm{demo}},\widetilde{\mathcal{D}}_{\mathrm{warm}}^{V})$
\State $V_{\psi_{\mathrm{warm}}} \gets \operatorname{TD}(V_{\psi^{(0)}};B_{\mathrm{warm}}^{V})$
\State $\widehat{\mathcal{D}}_{\mathrm{warm}} \gets \Call{Annotate}{\mathcal{D}_{\mathrm{warm}}^{\mathrm{on}},V_{\psi_{\mathrm{warm}}}}$
\State $B_{\mathrm{warm}}^{\pi} \gets \operatorname{Mix}_{1:1}(\mathcal{D}_{\mathrm{demo}},\widehat{\mathcal{D}}_{\mathrm{warm}})$
\State $\pi_{\theta_{\mathrm{warm}}} \gets \operatorname{FM}(\pi_{\theta^{(0)}};B_{\mathrm{warm}}^{\pi})$
\State $(\pi_{\theta^{(0)}},V_{\psi^{(0)}}) \gets (\pi_{\theta_{\mathrm{warm}}},V_{\psi_{\mathrm{warm}}})$
\State $\mathcal{D}_{\mathrm{off}} \gets \mathcal{D}_{\mathrm{warm}}^{\mathrm{on}}$
\State $\mathcal{D}_{\mathrm{HI,off}} \gets \Call{InterventionSubset}{\mathcal{D}_{\mathrm{warm}}^{\mathrm{on}}}$
\Statex \hspace{\algorithmicindent}\texttt{// Stage 3: Online RL with Online DAgger}
\For{$k \gets 1$ to $N_{\mathrm{D}}$}
    \State $\mathcal{D}_{\mathrm{on}}^{k} \gets \Call{DeployWithIntervention}{\pi_{\theta^{(k-1)}}}$
    \State $\mathcal{D}_{\mathrm{pre}}^{k} \gets \Call{InterventionPrefixes}{\mathcal{D}_{\mathrm{on}}^{k}}$
    \State $\widetilde{\mathcal{D}}_{\mathrm{on}}^{V,k} \gets \Call{Downsample}_d(\mathcal{D}_{\mathrm{pre}}^{k})$ \Comment{same original-step sampling with TD horizon $H$}
    \State $B_{V}^{k} \gets \operatorname{Mix}_{1:1}(\mathcal{D}_{\mathrm{demo}}\cup\mathcal{D}_{\mathrm{off}},\widetilde{\mathcal{D}}_{\mathrm{on}}^{V,k})$
    \State $\widehat{\mathcal{D}}_{\mathrm{on}}^{k} \gets \Call{Annotate}{\mathcal{D}_{\mathrm{on}}^{k},V_{\psi^{(k-1)}}}$
    \State $B_{\pi}^{k} \gets \operatorname{Mix}_{1:1:1}(\mathcal{D}_{\mathrm{demo}},\mathcal{D}_{\mathrm{HI,off}},\widehat{\mathcal{D}}_{\mathrm{on}}^{k})$
    \Statex \hspace{\algorithmicindent}\textbf{Synchronously:}
    \State $V_{\psi^{(k)}} \gets \operatorname{TD}(V_{\psi^{(k-1)}};B_V^k)$
    \State $\pi_{\theta^{(k)}} \gets \operatorname{FM}(\pi_{\theta^{(k-1)}};B_{\pi}^{k})$
    \State $\mathcal{D}_{\mathrm{HI,off}} \gets \mathcal{D}_{\mathrm{HI,off}}\cup\Call{InterventionSubset}{\mathcal{D}_{\mathrm{on}}^{k}}$
    \State $\mathcal{D}_{\mathrm{off}} \gets \mathcal{D}_{\mathrm{off}}\cup\mathcal{D}_{\mathrm{on}}^{k}$
\EndFor
\Statex \hspace{\algorithmicindent}\texttt{// Stage 4: Online RL with LfE}
\For{$j \gets 1$ to $N_{\mathrm{E}}$}
    \State $k \gets N_{\mathrm{D}}+j$
    \State $\mathcal{D}_{\mathrm{on}}^{k} \gets \Call{DeployAutonomously}{\pi_{\theta^{(k-1)}}}$ with sparse terminal outcomes
    \State $\mathcal{D}_{\mathrm{on,TD}}^{k} \gets \Call{AttachSparseOutcome}{\mathcal{D}_{\mathrm{on}}^{k}}$
    \State $\widetilde{\mathcal{D}}_{\mathrm{on}}^{V,k} \gets \Call{Downsample}_d(\mathcal{D}_{\mathrm{on,TD}}^{k})$ \Comment{retain original starts $t=0,d,2d,\ldots$ while each TD tuple still spans $H$ steps}
    \State $B_{V}^{k} \gets \operatorname{Mix}_{1:1}(\mathcal{D}_{\mathrm{demo}}\cup\mathcal{D}_{\mathrm{off}},\widetilde{\mathcal{D}}_{\mathrm{on}}^{V,k})$
    \State $\widehat{\mathcal{D}}_{\mathrm{on}}^{k} \gets \Call{Annotate}{\mathcal{D}_{\mathrm{on}}^{k},V_{\psi^{(k-1)}}}$
    \State $B_{\pi}^{k} \gets \operatorname{Mix}_{1:1:1}(\mathcal{D}_{\mathrm{demo}},\mathcal{D}_{\mathrm{HI,off}},\widehat{\mathcal{D}}_{\mathrm{on}}^{k})$
    \Statex \hspace{\algorithmicindent}\textbf{Synchronously:}
    \State $V_{\psi^{(k)}} \gets \operatorname{TD}(V_{\psi^{(k-1)}};B_V^k)$
    \State $\pi_{\theta^{(k)}} \gets \operatorname{FM}(\pi_{\theta^{(k-1)}};B_{\pi}^{k})$
    \State $\mathcal{D}_{\mathrm{off}} \gets \mathcal{D}_{\mathrm{off}}\cup\mathcal{D}_{\mathrm{on}}^{k}$
\EndFor
\State \Return $\pi_{\theta^{(N_{\mathrm{D}}+N_{\mathrm{E}})}},V_{\psi^{(N_{\mathrm{D}}+N_{\mathrm{E}})}}$
\end{algorithmic}
\par\nobreak\hrule
\endgroup
\addvspace{\intextsep}

\section{Training Data and Objective Details}
\label{app:training_details}

\subsection{Distributional TD Targets}
\label{app:value_targets}

Gaussian labels are used only for progress initialization in Eq.~\ref{eq:progress_initialization}.
TD learning instead transforms the complete target-network distribution without first taking its expectation.
For each retained $H$-step tuple, $d_t=1$ when the segment reaches a task outcome or the terminal takeover boundary of an intervention prefix, and $d_t=0$ otherwise.
In a complete rollout, a human takeover does not terminate the backup, allowing subsequent recovery to contribute to the target.
We define the transformed atom $y_{t,k}=R_{t:t+H}+\gamma^H(1-d_t)z_k$ and its support-clipped location $\widetilde y_{t,k}=\min(1,\max(0,y_{t,k}))$.
With atom spacing $\Delta z=1/(K-1)$ and $(u)_+=\max(u,0)$, the linear categorical projection is
\begin{equation}
    q_{t,i}^{\mathrm{TD}}
    =\sum_{k=1}^{K}p_{\bar\psi}(z_k\mid s_{t+H})
    \left(1-\frac{|z_i-\widetilde y_{t,k}|}{\Delta z}\right)_+,
    \qquad i=1,\ldots,K.
    \label{eq:categorical_projection}
\end{equation}
Each transformed atom distributes its mass between its two neighboring support atoms, with all mass assigned to one atom when their locations coincide.
At termination, all bootstrap atoms map to the accumulated reward, independent of the next-state prediction.
The scalar advantage uses the same reward and terminal mask but the online value network $V_\psi$.

\subsection{Warmup Value-Data Construction}

We deploy $(\pi_{\theta^{(0)}},V_{\psi^{(0)}})$ before online RL to collect policy rollouts and human-intervention trajectories, denoted by $\mathcal{D}_{\mathrm{warm}}^{\mathrm{on}}=\mathcal{D}_{\pi_{\theta^{(0)}}}^{\mathrm{rollout}}\cup\mathcal{D}_{\pi_{\theta^{(0)}}}^{\mathrm{HI}}$, where the two subsets contain policy-controlled rollouts and human-controlled segments, respectively.
For a trajectory $\tau_{0:T}\in\mathcal{D}_{\mathrm{warm}}^{\mathrm{on}}$ with $J$ human-intervention segments, we locate their start indices $0<t_0<\cdots<t_{J-1}<T$ and construct the prefix set
\begin{equation}
    \mathcal{P}(\tau_{0:T})
    =\left\{\tau_{0:t_j}\right\}_{j=0}^{J-1}
    \cup\left\{\tau_{0:T}\right\}.
    \label{eq:intervention_prefixes}
\end{equation}
GR-RL constructs failed prefixes at annotated retry keyframes while retaining the original successful trajectory \citep{li2025grrl}.
We use human-takeover boundaries instead: each prefix $\tau_{0:t_j}$ has zero rewards and terminates at $t_j$, whereas the complete rollout $\tau_{0:T}$ retains its outcome reward and continues bootstrapping across takeovers.
This augmentation supplies unsuccessful-attempt supervision alongside recovery outcomes to make the value expert sensitive to retries and partial failures.
Shared observations can therefore receive different TD targets under the two termination conventions.
The categorical predictor can represent mixtures of these targets, including multimodal distributions, under the cross-entropy objective.
Its expectation reflects the augmented training data rather than solely the original task's uninterrupted return.

Temporal downsampling selects trajectory start indices at integer stride $d\geq1$ while preserving each $H$-step reward and its terminal-masked bootstrap at $s_{t+H}$.
For a trajectory $\tau_{0:L}$, the operator $\operatorname{DS}_d$ selects
\begin{equation}
    \operatorname{DS}_d(\tau_{0:L})
    =\left\{
    \bigl(s_t,a_{t:t+H},R_{t:t+H},s_{t+H},d_t\bigr)
    :t\in\{0,d,2d,\ldots\},\ t+H\leq L
    \right\}.
    \label{eq:temporal_downsampling}
\end{equation}
For a set of prefixes, $\operatorname{DS}_d$ is applied to each prefix and the resulting transition sets are unioned.
We construct the processed warmup rollout dataset for value learning as
\begin{equation}
    \widetilde{\mathcal{D}}_{\mathrm{warm}}^V
    =\bigcup_{\tau\in\mathcal{D}_{\mathrm{warm}}^{\mathrm{on}}}
    \ \bigcup_{\tau'\in\mathcal{P}(\tau)}
    \operatorname{DS}_d(\tau').
    \label{eq:warmup_value_dataset}
\end{equation}
Algorithm~\ref{alg:training_pipeline} specifies the corresponding warmup and online batch sources, fixed mixing ratios, annotation snapshots, and buffer updates.

\subsection{Online Data Sources}

We initialize the online-round indexing by $(\pi_{\theta^{(0)}},V_{\psi^{(0)}})=(\pi_{\theta_{\mathrm{warm}}},V_{\psi_{\mathrm{warm}}})$ and use $k\in\{1,\ldots,N_{\mathrm{D}}\}$ to index the $N_{\mathrm{D}}$ Online DAgger rounds.
The offline source is the fixed demonstration dataset $\mathcal{D}_{\mathrm{demo}}$.
The on-policy source, denoted by $\mathcal{D}_{\mathrm{on}}^k$, consists of the current rollouts collected by deploying $\pi_{\theta^{(k-1)}}$ with human intervention enabled.
The off-policy source contains all rollouts collected before round $k$:
\begin{equation}
    \mathcal{D}_{\mathrm{off}}^k
    =\mathcal{D}_{\mathrm{warm}}^{\mathrm{on}}
    \cup\bigcup_{i=1}^{k-1}\mathcal{D}_{\mathrm{on}}^i.
    \label{eq:online_buffer_definitions}
\end{equation}
We use $\operatorname{HI}(\mathcal{D})$ to extract all human-controlled segments from a trajectory set, giving the historical correction data $\mathcal{D}_{\mathrm{HI,off}}^k=\operatorname{HI}(\mathcal{D}_{\mathrm{off}}^k)$.
For value learning, the offline and off-policy sources form the replay dataset $\mathcal{D}_{\mathrm{replay}}^k=\mathcal{D}_{\mathrm{demo}}\cup\mathcal{D}_{\mathrm{off}}^k$.
We process $\mathcal{D}_{\mathrm{on}}^k$ using the intervention-prefix construction and temporal downsampling, yielding $\widetilde{\mathcal{D}}_{\mathrm{on}}^{V,k}$.
We use $\operatorname{Annotate}(\mathcal{D};V_\psi)$ to attach the value-derived advantage conditions produced by $V_\psi$ to the samples in $\mathcal{D}$.

\section{Formal Setup for Soft-Aliasing Analysis}
\label{app:soft_aliasing_setup}

We assume that action sequences lie in the Euclidean product space $\mathcal{A}^{H}$, and we use $\mathcal{P}_2(\mathcal{A}^{H})$ to denote the probability distributions with finite second moment, $\int_{\mathcal{A}^{H}}\|a\|_2^2\,\mathrm{d}\mu(a)<\infty$.
This condition only ensures that the Wasserstein distance $W_2$ is finite.
We let $h_t$ denote the full interaction history at timestep $t$.
For each analyzed state $i\in\{1,2\}$, $\mathcal{H}_i$ denotes its corresponding local set of full histories among samples labeled $o_t=1$ as defined in Sec.~\ref{sec:advantage_conditioned_policy}.
Their target action-chunk distributions are
\begin{equation}
    \mu_i
    =\mathcal{L}(a_{t:t+H}\mid h_t\in\mathcal{H}_i,o_t=1)
    \in\mathcal{P}_2(\mathcal{A}^{H}),
    \qquad i\in\{1,2\},
    \label{eq:appendix_target_action_law}
\end{equation}
where $\mathcal{L}$ denotes the probability law under the effective policy-training distribution.
For representation type $r\in\{\mathrm{obs},\pi\}$, we use the representations $z_i^r$ introduced in Sec.~\ref{sec:soft_aliasing_analysis} and their corresponding distances $\varepsilon_r$ in Eq.~\ref{eq:value_cache_geometry}.
We define $q_i^r=\pi_\theta(\cdot\mid z_i^r,o_t=1)$ and the corresponding policy prediction error $e_i^r=W_2(q_i^r,\mu_i)$.
The local $L$-Lipschitz assumption used in Theorem~\ref{thm:soft_aliasing} is
\begin{equation}
    W_2(q_1^r,q_2^r)\leq L\varepsilon_r.
    \label{eq:appendix_local_lipschitz}
\end{equation}
If the two states have relative weights $\alpha$ and $1-\alpha$, where $\alpha\in(0,1)$, the same argument also yields the frequency-weighted bound
\begin{equation}
    \alpha(e_1^r)^2+(1-\alpha)(e_2^r)^2
    \geq
    \alpha(1-\alpha)(\Delta_a-L\varepsilon_r)_+^2.
    \label{eq:appendix_weighted_soft_aliasing_bound}
\end{equation}

\section{Proof of the Soft-Aliasing Bound}
\label{app:soft_aliasing_proof}

\begin{proof}[Proof of Theorem~\ref{thm:soft_aliasing}]
We fix $r\in\{\mathrm{obs},\pi\}$ and omit the superscript $r$ from $q_i^r$ and $e_i^r$ for readability.
By the Wasserstein triangle inequality and the local Lipschitz assumption,
\begin{equation}
    \Delta_a
    \leq W_2(\mu_1,q_1)+W_2(q_1,q_2)+W_2(q_2,\mu_2)
    \leq e_1+L\varepsilon_r+e_2.
    \label{eq:appendix_aliasing_triangle}
\end{equation}
Hence $e_1+e_2\geq(\Delta_a-L\varepsilon_r)_+$.
Since $\max_i e_i\geq(e_1+e_2)/2$, this proves Eq.~\ref{eq:soft_aliasing_bound}.
Exact matching on the two inputs requires $q_i=\mu_i$ and therefore requires $\Delta_a\leq L\varepsilon_r$ by Eq.~\ref{eq:appendix_local_lipschitz}.
Consequently, $L\varepsilon_{\mathrm{obs}}<\Delta_a$ rules out exact observation-only matching, whereas $\Delta_a\leq L\varepsilon_\pi$ makes exact matching at $z_1^\pi$ and $z_2^\pi$ compatible with the local Lipschitz constraint.
This compatibility statement does not assert that a particular parameterized policy will attain the targets after optimization.

For completeness, we prove the frequency-weighted extension in Eq.~\ref{eq:appendix_weighted_soft_aliasing_bound}.
For $e_1,e_2\geq0$, direct expansion gives
\begin{equation}
    \alpha e_1^2+(1-\alpha)e_2^2
    -\alpha(1-\alpha)(e_1+e_2)^2
    =\left(\alpha e_1-(1-\alpha)e_2\right)^2\geq0.
    \label{eq:appendix_weighted_aliasing}
\end{equation}
Combining this inequality with $e_1+e_2\geq(\Delta_a-L\varepsilon_r)_+$ proves the weighted bound.
\end{proof}

\end{document}